\declaretheorem[name=Theorem]{Theorem}
\declaretheorem[name=Lemma, numberlike=Theorem]{Lemma}
\title{Adaptive Online Learning with Varying Norms}
\author{
  \textbf{Ashok Cutkosky}\\Google Research\\\texttt{ashok@cutkosky.com}
}
\newcommand{\R}{\mathbb{R}}
\newcommand{\Z}{\mathbb{Z}}
\newcommand{\argmin}{\mathop{\text{argmin}}}
\newcommand{\w}{\mathring{w}}
\newcommand{\x}{\mathring{x}}
\newcommand{\ol}{\mathcal{A}}
\newcommand{\trace}{\text{tr}}
\newcommand{\eff}{\text{eff}}
\newcommand{\vv}{\mathring{v}}
\newcommand{\wealth}{\text{Wealth}}
\begin{document}

\maketitle

\begin{abstract}
    Given any increasing sequence of norms $\|\cdot\|_0,\dots,\|\cdot\|_{T-1}$, we provide an online convex optimization algorithm that outputs points $w_t$ in some domain $W$ in response to convex losses $\ell_t:W\to \R$ that guarantees regret $R_T(u)=\sum_{t=1}^T \ell_t(w_t)-\ell_t(u)\le \tilde O\left(\|u\|_{T-1}\sqrt{\sum_{t=1}^T \|g_t\|_{t-1,\star}^2}\right)$ where $g_t$ is a subgradient of $\ell_t$ at $w_t$. Our method does not require tuning to the value of $u$ and allows for arbitrary convex $W$. We apply this result to obtain new ``full-matrix''-style regret bounds. Along the way, we provide a new examination of the full-matrix AdaGrad algorithm, suggesting a better learning rate value that improves significantly upon prior analysis. We use our new techniques to tune AdaGrad on-the-fly, realizing our improved bound in a concrete algorithm.
\end{abstract}


\section{Introduction}
This paper provides new algorithms for online learning, which is a popular problem formulation for modeling streaming and stochastic optimization \citep{zinkevich2003online, cesa2006prediction, shalev07online}. Online learning is a game of $T$ rounds between an algorithm and the environment. In each round, the algorithm first chooses a point $w_t$ in some domain $W$, after which the environment presents the learner with a loss function $\ell_t:W\to \R$. Performance is measured by the \emph{regret}, which is a function of some benchmark point $\w$: $R_T(\w)=\sum_{t=1}^T \ell_t(w_t)-\ell_t(\w)$.

In order to make the problem tractable, we will assume that each $\ell_t$ is convex and $W$ is a convex domain, which is often called \emph{online convex optimization}. Now, if we let $g_t$ be an arbitrary subgradient of $\ell_t$ at $w_t$, we have:
\begin{align*}
    R_T(\w)\le \sum_{t=1}^T \langle g_t, w_t - w\rangle
\end{align*}
Because of this fact, for the rest of this paper we consider exclusively the case of linear losses and take $\sum_{t=1}^T \langle g_t, w_t-\w\rangle$ as the definition of $R_T(\w)$. Well-known lower bounds \citep{abernethy2008optimal} tell us that even if the environment is restricted to $\|g_t\|_{\mathbf{2}}\le 1$ and $\|\w\|_{\mathbf{2}}\le 1$, no algorithm can guarantee regret better than $O(\sqrt{T})$ in all scenarios, and this bound is in fact obtained by online gradient descent \citep{zinkevich2003online}. In order to go beyond this minimax result, there is a large body of work on designing \emph{adaptive} algorithms \citep{auer2002adaptive, duchi10adagrad,mcmahan2010adaptive, mcmahan2012no,foster2015adaptive,orabona2014simultaneous, orabona2016coin,foster2018online,jun2019parameter,   kempka2019adaptive, van2019user}. A common goal of adaptive algorithms is to obtain a regret bound like:
\begin{align}
    R_T(\w)\le \|\w\|\sqrt{\sum_{t=1}^T\|g_t\|_\star^2}\label{eqn:constantnorm}
\end{align}
where $\|\cdot\|$ is some norm and $\|\cdot\|_\star$ is the dual norm. This type of bound is appealing: in the worst-case we never do worse than the minimax optimal rate, but in many cases we can do much better. For example, if $\|\w\|$ is small (intuitively, the benchmark point is ``simple''), or if the $\|g_t\|_\star$ values are small (intuitively, the losses are ``simple''), then we obtain low regret. 
The challenge in obtaining these kinds of bounds lies in the fact that the values that appear in the regret guarantee are unknown to the algorithm and so intuitively the algorithm must somehow learn about them on-the-fly.

In this paper, we provide a general technique for achieving adaptive bounds. Our primary result is an algorithm that takes a sequence of increasing norms $\|\cdot\|_0,\dots,\|\cdot\|_{T-1}$ and obtains regret
\begin{align}
R_T(\w)&\le \tilde O\left(\|\w\|_{T-1}^2\sqrt{\sum_{t=1}^T \|g_t\|_{t-1,\star}^2}\right)\label{eqn:varynorm}
\end{align}
The norms $\|\cdot\|_t$ may be generated on-the-fly (e.g. $\|\cdot\|_t$ can depend on $g_t$). Further, our algorithm can incorporate arbitrary convex domains $W$. Prior adaptive algorithms have typically required specific forms of $W$, such as being an entire vector space or having bounded diameter, and have often focused on a single norm.

As a corollary of this result, we obtain new ``full-matrix'' regret bounds. The first of these is:
\begin{align}
    R_T(\w)&\le \tilde O\left(\sqrt{r \sum_{t=1}^T \langle g_t, \w\rangle^2}\right)\label{eqn:fullmatrix}
\end{align}
where here $r$ is the rank of the subspace spanned by the $g_t$. Such a bound may be desirable because it allows the algorithm to in some sense ``ignore'' irrelevant directions in the $g_t$ by projecting them all along $\w$. This adaptivity comes at a price: all full-matrix algorithms to date (including ours) have substantially slower iterations than algorithms that obtain bounds like (\ref{eqn:constantnorm}) because they involve maintaining a $d\times d$ matrix, and so require at least $O(d^2)$ time per round in contrast to $O(d)$ time, where $d$ is the dimension of $W$, hence the name ``full-matrix''. Nevertheless, one can hope that the regret improves enough and makes up for the increased computational burden. We provide the first algorithm to achieve (\ref{eqn:fullmatrix}) for general convex domains $W$ rather than entire vector spaces.

Next, we provide a new analysis of the regret of the full-matrix AdaGrad algorithm \citep{duchi10adagrad}. Prior analysis of full-matrix AdaGrad yields a regret bound that is never better than using $\|\cdot\|=\|\cdot\|_{\mathbf{2}}$ in (\ref{eqn:constantnorm})\footnote{
Note that the prior bound for the \emph{diagonal} AdaGrad algorithm is different and can indeed provide gains over (\ref{eqn:constantnorm}).}. Nevertheless, full-matrix AdaGrad is empirically successful despite requiring slow matrix manipulations, suggesting that something is missing from the analysis. We posit that the missing ingredient is a suboptimal tuning of the learning rate, and show that with oracle tuning one can obtain the regret bound:
\begin{align}
    R_T(\w)&\le \tilde O\left[\sqrt{\left\langle \w, \sqrt{\sum_{t=1}^T g_tg_t^\top}\ \w\right\rangle\trace\sqrt{\sum_{t=1}^T g_tg_t^\top}}\right]\label{eqn:adagradbound}
\end{align}
We provide an interpretation of this bound suggesting that it allows for small regret when $\sum_{t=1}^T g_tg_t^\top$ is \emph{approximately} low-rank. Moreover, we can automatically achieve this oracle tuning as a simple corollary of our bound (\ref{eqn:varynorm}). Intriguingly, the three regret bounds (\ref{eqn:constantnorm}), (\ref{eqn:fullmatrix}), and (\ref{eqn:adagradbound}) are all incomparable - there are sequences of $g_t$ such that any one of them might be significantly better than the others.

Finally, we move beyond pure online linear optimization to consider linear supervised learning. This is a variant of online convex optimization for which in each round the algorithm is provided with a feature vector $f_t$ before it must decide on the output $w_t$. The loss $\ell_t$ is constrained to be of the form $\ell_t(w) = c_t(\langle f_t, w\rangle)$ for some convex function $c_t$. This describes learning with linear models, such as in logistic regression. A goal in this setting is to be \emph{scale-invariant}: the values $\langle f_t, w_t\rangle$ should be unchanged if the features are rescaled by some unknown factor, as explored by \cite{ross2013normalized,luo2016efficient, kotlowski2019scale, kempka2019adaptive}. Intuitively, scale-invariant algorithms are robust to using the ``wrong units'' to measure the features. Our techniques provide a new scale-invariant algorithm that improves logarithmic factors over prior analyses.

This paper is organized as follows: in Section \ref{sec:background}, we lay out our setting and introduce some background from the literature. In Section \ref{sec:learning}, we describe our primary technique and show how to achieve the bound (\ref{eqn:varynorm}). In Sections \ref{sec:fullmatrix} and \ref{sec:adagrad}, we show how to use our approach to achieve bounds (\ref{eqn:fullmatrix}) and (\ref{eqn:adagradbound}), and in Section \ref{sec:scale} we provide our results for scale-invariant algorithms.

\section{Preliminaries}\label{sec:background}

\subsection{Notation and Setup}
Throughout this paper we will make use of a variety of seminorms $\|\cdot\|$. We use $\|\cdot\|_0,\dots,\|\cdot\|_{T-1}$ to indicate an arbitrary sequence of $T$ potentially different seminorms. In order to avoid confusion between the $L_p$ norm and the $p$th element of a sequence of seminorms, we denote the $L_p$ norm using a bold font: $\|\cdot\|_{\mathbf{p}}$. When $W\subset \R^d$, we will also make use of the norm specified by a symmetric positive semi-definite matrix $M$ defined by $\|x\|_M=x^\top Mx$. We will use the notation $G_t=\sum_{i=1}^t g_tg_t^\top$ as a shorthand for the sum of the outer product of the loss vectors $g_t$. Finally, by abuse of notation we will write the dual of a seminorm as $\|x\|_{\star}=\sup_{\|y\|\le 1}\langle y, x\rangle$. Note that $\|x\|_{\star}$ may be infinity for some values of $x$ if $\|\cdot\|$ is a seminorm rather than a true norm.

We restrict our attention to those seminorms such that the function $\frac{1}{2}\|\cdot\|^2$ is $\sigma$-strongly-convex with respect to the same seminorm $\|\cdot\|$ for some $\sigma$. A function $f:W\to \R$ is $\sigma$-strongly-convex if for all $x$ and $y$ and $g\in \partial f$ we have $f(y)\ge f(x)+\langle g, y-x\rangle + \frac{\sigma}{2}\|x-y\|^2$. We have mildly relaxed the definition of strong-convexity to allow $\|\cdot\|$ to be a seminorm rather than a norm. All of the properties of strong-convexity we need in our analyses still hold under this definition. 

We will assume $W$ is a convex set for which it is possible to compute the projection operation $\Pi(x)=\argmin_{w\in W}\|w-x\|$ for any seminorm $\|\cdot\|$ we are interested in. We will also usually require $\|g_t\|_\star\le 1$ for the seminorms we consider. We recall for convenience here that $\|g\|_{M,\star}^2=\langle g, M^{-1} g\rangle$ for $g$ in the range of $M$ and infinity otherwise. The kernel and range of a symmetric matrix are orthogonal, so the use of inverse notation here results in a well-defined function.

Finally, in order to ease exposition we have suppressed many constants and occasionally a logarithmic factor in our main presentation. For completeness, we provide full characterizations of all our results including constant factors in the Appendix along with any proofs not in the main text.

\subsection{Follow-the-Regularized-Leader}

Follow-the-Regularized-Leader (FTRL) \citep{shalev07online} is one of the most successful abstractions for designing online convex optimization algorithms (see \cite{mcmahan2014survey} for a detailed survey). FTRL algorithms produces outputs $w_1,\dots,w_T$ through the use of \emph{regularizer} functions $\psi_0,\dots,\psi_{T-1}$. Specifically, $w_{t+1}$ is given by:
\begin{align*}
    w_{t+1} = \argmin_{w\in W} \psi_t(w) + \sum_{i=1}^{t} \langle g_i, w\rangle
\end{align*}
The following result from \cite{mcmahan2014survey} characterizes the regret of FTRL:
\begin{Theorem}[Adapted from \cite{mcmahan2014survey} Theorem 1]\label{thm:ftrl}
Suppose each $\psi_t$ is $\sigma_t$-strongly-convex with respect to a seminorm $\|\cdot\|_t$ for some $\sigma_t$, and $\psi_{t+1}(w)\ge \psi_t(w)$ for all $t$ and all $w\in W$. Further suppose $\inf_{w\in W}\psi_0(w)=0$. Then the regret of FTRL is bounded by:
\begin{align*}
    R_T(\w)\le \psi_{T-1}(\w) + \frac{1}{2}\sum_{t=1}^T \frac{\|g_t\|_{t-1,\star}^2}{\sigma_{t-1}}
\end{align*}
where recall we define $\|g\|_{\star}=\sup_{\|x\|\le 1} \langle g,x\rangle$ for any seminorm $\|\cdot\|$.
\end{Theorem}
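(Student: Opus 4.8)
The plan is to run the standard FTRL telescoping argument (this is essentially the proof in \cite{mcmahan2014survey}), taking care around the seminorm relaxation of strong convexity. First I would introduce the cumulative objectives
\begin{align*}
F_t(w) = \psi_{t-1}(w) + \sum_{i=1}^{t-1}\langle g_i, w\rangle,
\end{align*}
so that the FTRL update reads $w_t = \argmin_{w\in W} F_t(w)$ for $t=1,\dots,T$ (with $F_1 = \psi_0$, hence $\psi_0(w_1)=\inf_W\psi_0 = 0$). I would then extend the sequence by setting $\psi_T := \psi_{T-1}$, which still satisfies the monotonicity hypothesis, and define $F_{T+1}$ and $w_{T+1}$ accordingly. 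Writing $h_t := \psi_t - \psi_{t-1}\ge 0$ on $W$ for the incremental regularizers, we have $F_{t+1}(w) = F_t(w) + \langle g_t, w\rangle + h_t(w)$, and $F_t$ inherits $\sigma_{t-1}$-strong-convexity with respect to $\|\cdot\|_{t-1}$ from $\psi_{t-1}$, since adding a linear term does not change strong convexity.

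The heart of the argument is a per-round bound on $\langle g_t, w_t\rangle + F_t(w_t) - F_{t+1}(w_{t+1})$. Since $w_t$ minimizes the convex function $F_t$ over the convex set $W$, the first-order optimality condition supplies a subgradient $\nabla F_t(w_t)$ with $\langle \nabla F_t(w_t), w_{t+1}-w_t\rangle \ge 0$; combining this with $\sigma_{t-1}$-strong-convexity gives $F_t(w_{t+1}) \ge F_t(w_t) + \tfrac{\sigma_{t-1}}{2}\|w_{t+1}-w_t\|_{t-1}^2$. Substituting this, together with $h_t(w_{t+1})\ge 0$, into $F_{t+1}(w_{t+1}) = F_t(w_{t+1}) + \langle g_t, w_{t+1}\rangle + h_t(w_{t+1})$ yields
\begin{align*}
\langle g_t, w_t\rangle + F_t(w_t) - F_{t+1}(w_{t+1}) \le \langle g_t, w_t - w_{t+1}\rangle - \tfrac{\sigma_{t-1}}{2}\|w_{t+1}-w_t\|_{t-1}^2.
\end{align*}
Applying Fenchel--Young for the (semi)norm pair $(\|\cdot\|_{t-1},\|\cdot\|_{t-1,\star})$, namely $\langle g_t, w_t - w_{t+1}\rangle \le \tfrac{1}{2\sigma_{t-1}}\|g_t\|_{t-1,\star}^2 + \tfrac{\sigma_{t-1}}{2}\|w_t - w_{t+1}\|_{t-1}^2$, cancels the quadratic term and leaves $\langle g_t, w_t\rangle + F_t(w_t) - F_{t+1}(w_{t+1}) \le \tfrac{1}{2\sigma_{t-1}}\|g_t\|_{t-1,\star}^2$ (trivially true when $\|g_t\|_{t-1,\star}=\infty$).

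Finally I would sum over $t=1,\dots,T$ and telescope the $F$-terms: $\sum_{t=1}^T (F_t(w_t) - F_{t+1}(w_{t+1})) = F_1(w_1) - F_{T+1}(w_{T+1}) = -F_{T+1}(w_{T+1})$, and $F_{T+1}(w_{T+1}) = \min_W F_{T+1} \le F_{T+1}(\w) = \psi_{T-1}(\w) + \sum_{t=1}^T\langle g_t,\w\rangle$. Rearranging gives exactly $\sum_{t=1}^T\langle g_t, w_t-\w\rangle \le \psi_{T-1}(\w) + \tfrac12\sum_{t=1}^T \tfrac{\|g_t\|_{t-1,\star}^2}{\sigma_{t-1}}$. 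The only genuinely delicate points, and where I would be most careful, are: (i) checking that $\sigma$-strong-convexity with respect to a mere seminorm still yields the quadratic lower bound at a constrained minimizer and that Fenchel--Young survives for seminorms (both follow from the definitions, using that $\|x\|_{t-1}=0$ forces $\langle g_t, x\rangle = 0$ whenever $\|g_t\|_{t-1,\star}<\infty$); and (ii) the bookkeeping of the $\psi_T := \psi_{T-1}$ look-ahead step, so the telescoped boundary term matches the advertised $\psi_{T-1}(\w)$.
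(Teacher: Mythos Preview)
Your proposal is correct and follows exactly the standard FTRL telescoping argument from \cite{mcmahan2014survey} that the paper cites; the paper itself does not supply a proof but simply invokes that reference and remarks that the extension to seminorms is ``relatively straightforward to check.'' Your care with points (i) and (ii)---verifying that the constrained-minimizer quadratic lower bound and the Fenchel--Young inequality both survive the seminorm relaxation---is precisely the verification the paper alludes to but omits.
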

The original presentation is stated for the case that $\|\cdot\|$ is a true norm, but it is relatively straightforward to check that nothing changes when we move to the seminorm case, where recall that we defined a ``dual seminorm'' analogously to the dual norm as $\|g\|_\star = \sup_{\|x\|\le 1}\langle x, g\rangle$.

The FTRL algorithm template has been used to great effect in the design of adaptive algorithms through clever choices of regularizer functions $\psi_t$. In particular, many prior works \citep{duchi10adagrad,mcmahan2010adaptive,orabona2016scale} achieve bounds similar to:
\begin{align*}
    \frac{1}{\eta}\|\w\|^2_{\mathbf{2}}\sqrt{\sum_{t=1}^T \|g_t\|_{\mathbf{2}}^2} + \eta \sqrt{\sum_{t=1}^T \|g_t\|_{\mathbf{2}}^2}
\end{align*}
for some fixed learning-rate scaling $\eta$ chosen by the user. Note that with the optimal tuning $\eta=\|\w\|_{\mathbf{2}}$, this bound recovers (\ref{eqn:constantnorm}) for $\|\cdot\|=\|\cdot\|_{\mathbf{2}}$.  Unfortunately, this value of $\eta$ is unknown a priori (and maybe even a posteriori) because we do not know what $\|\w\|_{\mathbf{2}}$ is.

\subsection{Parameter-Free Algorithms}
In an effort to fix the need to tune $\eta$ in FTRL algorithms, there has been a push for ``parameter-free'' algorithms that can adapt to unknown values of $\w$ \citep{mcmahan2012no, orabona2013dimension,   orabona2016coin, foster2017parameter, cutkosky2017online, foster2018online,  cutkosky2018black, kempka2019adaptive}. These algorithms make use of a known bound on the norm of $g_t$ in order to achieve adaptivity to $\|\w\|$. We will make use of the following recent bound (which is optimal up to constants and quantities inside logarithms):
\begin{Theorem}[Adapted from \cite{cutkosky2019matrix} Theorem 2]\label{thm:1dolo}
    For any user-specified values $\epsilon>0$ and $0\le Z \le 1$, there exists an online convex optimization algorithm with domain $W=\R$ that runs in time $O(1)$ per update such that if $|g_t|\le 1$ for all $t$, the regret is bounded by:
    \begin{align}
        R_T(\w)=\sum_{t=1}^Tg_t(w_t-\w)&\le  O\left[\epsilon+|\w|\max\left(\sqrt{\frac{1+\sum_{t=1}^T g_t^2}{Z}\log\left(1+\frac{\left(\frac{1+\sum_{t=1}^T g_t^2}{Z}\right)^{\frac{1}{2}+\frac{Z}{2}}|\w|}{\epsilon}\right)},\right.\right.\nonumber\\
        &\left.\left.\quad\quad\quad\quad\quad\log\left(1+\frac{\left(\frac{1+\sum_{t=1}^T g_t^2}{Z}\right)^{\frac{1}{2}+\frac{Z}{2}}|\w|}{\epsilon}\right)\right)\right]\label{eqn:1dregretbound}
    \end{align}
\end{Theorem}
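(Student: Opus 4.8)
The plan is to derive this from the classical \emph{coin-betting} reduction, so that the only genuinely new ingredient is a betting strategy whose wealth grows in a way that adapts to $\sum_t g_t^2$. (Since the statement is adapted from \cite{cutkosky2019matrix} Theorem 2, one could equally just invoke that result after a change of variables; the sketch below reconstructs it.) First I would set up the reduction: imagine a ``bettor'' that starts with wealth $\epsilon$, on round $t$ wagers $v_t\in\R$, then observes $g_t$ with $|g_t|\le 1$ and updates $\wealth_t = \wealth_{t-1} - g_t v_t$, so that $\wealth_T = \epsilon - \sum_{t=1}^T g_t v_t$. Outputting $w_t = v_t$ gives, for every $\w$, $R_T(\w) = \sum_t g_t(w_t - \w) = \epsilon - \wealth_T - \w\sum_t g_t$. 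Hence, if the bettor can guarantee $\wealth_T \ge \Phi\!\left(-\sum_t g_t\right)$ for a convex potential $\Phi$ (depending on the losses only through $V_T := 1 + \sum_t g_t^2$), then $R_T(\w) \le \epsilon + \Phi^\star(\w)$, the Fenchel conjugate of $\Phi$. So the problem splits into (a) exhibiting a good $\Phi$ and (b) bounding $\Phi^\star$.

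For (a), I would bet a fraction of current wealth, $v_t = \beta_t\wealth_{t-1}$ with $\beta_t \in [-1/2,1/2]$, where $\beta_t$ is produced by an online-Newton-step-type update on the surrogate losses $\beta\mapsto -\log(1-\beta g_t)$, with a learning rate modulated by $Z$ (so $Z$ controls how aggressively the bettor grows its stake). The standard inductive analysis---bound each round's log-wealth increment below via $\log(1-\beta_t g_t) \ge -\beta_t g_t - \beta_t^2 g_t^2$ for $|\beta_t g_t|\le 1/2$, sum over $t$, and apply the ONS regret bound to the surrogate losses---should produce, up to universal constants, a wealth bound of the shape $\wealth_T \ge \epsilon\,(V_T/Z)^{-Z/2}\exp\!\left(Z(\sum_t g_t)^2/(cV_T)\right)$ for an absolute constant $c$; here the clipping of $\beta_t$ keeps the surrogate losses well-behaved and the assumption $|g_t|\le 1$ is used. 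For (b), I would compute the conjugate of $\Phi(x) = a\exp(x^2/b)$ with $a = \epsilon(V_T/Z)^{-Z/2}$ and $b = cV_T/Z$: maximizing $\w x - a\exp(x^2/b)$ gives a maximizer $x^\star \asymp \sqrt{b\log(\w\sqrt b/a)}$ whenever that is $\gtrsim \sqrt b$, in which case $\Phi^\star(\w) \asymp \w\sqrt{b\log(\w\sqrt b/a)}$, and otherwise (the small-$\w$ regime) $\Phi^\star(\w) \asymp \w\log(\w\sqrt b/a)$. Substituting $a,b$ turns $b$ into $\Theta(V_T/Z)$ and $\w\sqrt b/a$ into $\Theta\!\left((V_T/Z)^{(1+Z)/2}\w/\epsilon\right)$, so $R_T(\w) \le \epsilon + \Phi^\star(\w)$ becomes exactly the $\max$ of the two terms in (\ref{eqn:1dregretbound}). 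The $O(1)$-per-round claim is immediate: the bettor only maintains the running sums $\sum_{i<t} g_i$ and $V_{t-1}$ together with the scalar ONS state, and $W = \R$ requires no projection.

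The main obstacle is step (a): making the wealth-lower-bound induction close \emph{with exactly the right exponents}. Getting \emph{some} second-order wealth bound is routine; the delicate part is checking that adapting the learning rate to $\sum_{i<t} g_i^2$ and scaling it by $Z$ yields precisely the $(V_T/Z)^{-Z/2}$ leading factor and the $Z/V_T$ rate in the exponent, since these are what become the exponents $\tfrac12 + \tfrac Z2$ and the $1/Z$ scaling in the final bound---one must track the accumulated second-order error terms and confirm they remain $O(V_T)$ rather than, say, $O(V_T\log V_T)$. The reduction in step one and the conjugate computation in (b) are routine by comparison, and if desired the entire argument can be replaced by a direct appeal to \cite{cutkosky2019matrix} Theorem 2, noting additionally that restricting to $|g_t|\le 1$ and to $W = \R$ loses nothing.
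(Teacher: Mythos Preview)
Your proposal is correct and follows essentially the same approach as the paper. Both use the coin-betting reduction with fractional bets $v_t=\beta_t\wealth_{t-1}$, $\beta_t\in[-1/2,1/2]$; both choose $\beta_t$ by running a second-order online learner on the surrogate losses $-\log(1-\beta g_t)$ (you say ONS, the paper uses FTRL with adaptive quadratic regularizers on the linearized losses $z_t=g_t/(1-g_tv_t)$, which amounts to the same thing); both invoke the tangent bound $\log(1-x)\ge -x-x^2$; and both finish via a Fenchel-type computation. The only organizational difference is that you propose to optimize over the free betting-fraction parameter first to obtain a Gaussian-shaped potential $\Phi$ and then compute $\Phi^\star(\w)$, whereas the paper keeps $\vv$ free, takes the supremum over $G=-\sum_t g_t$ first (their cited Lemma~3), and only then optimizes over $\vv$ (their Lemma~4)---the two orderings yield the same bound and the same case split that produces the $\max$.
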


Note that the original presentation of Theorem \ref{thm:1dolo} in \cite{cutkosky2019matrix} states that $\w$ must satisfy $|\w|\le \frac{1}{2}$ and has no $\w$ dependency inside the logarithm. However, a brief inspection of that result shows that their algorithm was constructed by first obtaining the result of Theorem \ref{thm:1dolo} and then modifying the algorithm to enforce constraints. In order to ease notation in our results, we will just set $Z=1$ and drop the $Z$ dependency in Theorem \ref{thm:1dolo} from all future bounds in the paper. For completeness, we provide a proof of this result in Appendix \ref{sec:1doloproof}.

\section{Adapting to Varying Norms}\label{sec:learning}
In this Section, we show our how to achieve the regret bound (\ref{eqn:varynorm}) in arbitrary convex domains $W$. We decompose the problem into three stages: first, we use FTRL to obtain a bound of the desired form but with suboptimal dependence on $\|\w\|_{T-1}$. Then, we will show how to combine this with a one-dimensional parameter-free algorithm to obtain the desired bound in the case that $W$ is an entire vector space. Finally, we will show how to constrain our algorithm to arbitrary convex $W$. 

Our FTRL algorithm is reminiscent of prior adaptive FTRL methods, but we enforce a special \emph{time varying constraint}. This will make the algorithm much worse on its own, but allow for an overall improvement later. Specifically, suppose we have a sequence of norms $\|\cdot\|_0,\dots\|\cdot\|_{T-1}$ such that $\|x\|_t\ge \|x\|_{t-1}$, and $\frac{1}{2}\|\cdot\|_t^2$ is $\sigma$-strongly-convex with respect to $\|\cdot\|_t$ for all $t$ and $x$. Consider FTRL with regularizers:
\begin{align}
    \psi_t(w)=\left\{ \begin{array}{lr} \frac{1}{\sqrt{\sigma}}\|w\|_t^2\sqrt{1+\sum_{i=1}^t \|g_i\|_{i-1, \star}^2}& \text{if }\|w\|_t\le 1\\
    \infty &\text{if }\|w\|_t> 1
    \end{array}\right.\label{eqn:ftrl}
\end{align}
Then we have the following corollary of Theorem \ref{thm:ftrl}:
\begin{restatable}{Lemma}{thmconstrainedftrl}\label{thm:constrainedftrl}
Let $W$ be a real vector space and $\|\cdot\|_1,\dots,\|\cdot\|_T$ are an increasing sequence of norms on $W$ such that $\frac{1}{2}\|\cdot\|_t$ is $\sigma$-strongly-convex with respect to $\|\cdot\|$. Suppose we run FTRL with regularizers given by (\ref{eqn:ftrl}), and with $g_t$ satisfying $\|g_t\|_{t-1,\star}\le 1$ for all $t$. Then $\|w_t\|_{t-1}\le 1$ for all $t$, and for all $\w$ with $\|\w\|_{T-1}\le 1$, the regret of FTRL is bounded by
\begin{align*}
    R_T(\w)&\le \frac{1}{\sqrt{\sigma}}\left(\|\w\|_{T-1}^2\sqrt{1+\sum_{t=1}^{T-1} \|g_t\|_{t-1,\star}^2} + \sqrt{\sum_{t=1}^T \|g_t\|_{t-1,\star}^2}\right).
\end{align*}
\end{restatable}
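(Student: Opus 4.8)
The plan is to instantiate Theorem~\ref{thm:ftrl} with the regularizers $\psi_t$ of (\ref{eqn:ftrl}), verify its three hypotheses, and then simplify the resulting regret bound with a standard prefix-sum inequality. Throughout I read the strong-convexity hypothesis of the lemma as ``$\tfrac12\|\cdot\|_t^2$ is $\sigma$-strongly convex with respect to $\|\cdot\|_t$,'' consistent with how $\psi_t$ is written.

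First, the bound $\|w_t\|_{t-1}\le 1$ is immediate: by definition of FTRL, $w_t=\argmin_{w\in W}\big(\psi_{t-1}(w)+\sum_{i=1}^{t-1}\langle g_i,w\rangle\big)$ (with the empty sum for $t=1$), and since $\psi_{t-1}(w)=\infty$ whenever $\|w\|_{t-1}>1$ while the objective is finite (it equals $0$ at $w=0$), any minimizer lies in $\{\|w\|_{t-1}\le 1\}$. (Existence of the minimizer follows from strong convexity together with lower semicontinuity, and is automatic in finite dimensions.)

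Next I would check the three hypotheses of Theorem~\ref{thm:ftrl}. \emph{Monotonicity} $\psi_{t+1}\ge\psi_t$: if $\|w\|_{t+1}>1$ then $\psi_{t+1}(w)=\infty$; if $\|w\|_{t+1}\le 1$ then $\|w\|_t\le\|w\|_{t+1}\le 1$ as well, and both factors $\|w\|_t^2$ and $\sqrt{1+\sum_{i=1}^t\|g_i\|_{i-1,\star}^2}$ only increase with $t$, so $\psi_{t+1}(w)\ge\psi_t(w)$. \emph{Normalization}: $\psi_0(w)=\tfrac1{\sqrt\sigma}\|w\|_0^2$ on $\{\|w\|_0\le 1\}$ is nonnegative and vanishes at $0$, so $\inf_W\psi_0=0$. \emph{Strong convexity}: write $\psi_t=c_t\|\cdot\|_t^2+\iota_{B_t}$ with $c_t=\tfrac1{\sqrt\sigma}\sqrt{1+\sum_{i=1}^t\|g_i\|_{i-1,\star}^2}$, $B_t=\{\|w\|_t\le 1\}$, and $\iota_{B_t}$ the convex indicator. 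Since $\tfrac12\|\cdot\|_t^2$ is $\sigma$-strongly convex with respect to $\|\cdot\|_t$, the scaled function $c_t\|\cdot\|_t^2$ is $2\sigma c_t$-strongly convex, and adding the indicator of a convex set preserves strong convexity (a normal-cone subgradient term contributes a nonpositive inner product in the defining inequality), so $\psi_t$ is $\sigma_t$-strongly convex with respect to $\|\cdot\|_t$ with $\sigma_t=2\sigma c_t=2\sqrt\sigma\sqrt{1+\sum_{i=1}^t\|g_i\|_{i-1,\star}^2}$.

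Finally I would plug in. For $\|\w\|_{T-1}\le 1$ the penalty is $\psi_{T-1}(\w)=\tfrac1{\sqrt\sigma}\|\w\|_{T-1}^2\sqrt{1+\sum_{t=1}^{T-1}\|g_t\|_{t-1,\star}^2}$, and the sum term in Theorem~\ref{thm:ftrl} is
\[
\frac12\sum_{t=1}^T\frac{\|g_t\|_{t-1,\star}^2}{\sigma_{t-1}}=\frac1{4\sqrt\sigma}\sum_{t=1}^T\frac{\|g_t\|_{t-1,\star}^2}{\sqrt{1+\sum_{i=1}^{t-1}\|g_i\|_{i-1,\star}^2}}.
\]
Setting $a_t=\|g_t\|_{t-1,\star}^2$, the assumption $a_t\le 1$ gives $1+\sum_{i<t}a_i\ge\sum_{i\le t}a_i$, so each summand is at most $a_t/\sqrt{\sum_{i\le t}a_i}$, and the textbook bound $\sum_{t=1}^T a_t/\sqrt{\sum_{i\le t}a_i}\le 2\sqrt{\sum_{t=1}^T a_t}$ (via $a_t/\sqrt{S_t}\le 2(\sqrt{S_t}-\sqrt{S_{t-1}})$ and telescoping) bounds it by $\tfrac1{2\sqrt\sigma}\sqrt{\sum_{t=1}^T\|g_t\|_{t-1,\star}^2}$. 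Summing the two pieces gives the claimed bound --- in fact with a $\tfrac12$ in front of the second term, a minor improvement. The only steps with any subtlety are the claim that adding a convex indicator preserves strong convexity and tracking how the strong-convexity constant of $\tfrac12\|\cdot\|_t^2$ scales through the scalar $c_t$; everything else is bookkeeping.
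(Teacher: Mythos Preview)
Your proof is correct and follows essentially the same route as the paper's own proof: verify the hypotheses of Theorem~\ref{thm:ftrl} (monotonicity, normalization, strong convexity of $\psi_t$), plug in, and reduce the sum via $a_t\le 1$ and the standard prefix-sum inequality. Your bookkeeping of the strong-convexity constant is in fact a bit tighter than the paper's (which carries a looser constant through), and your observation that the second term could carry an extra factor $\tfrac12$ is accurate; the paper simply does not chase that constant.
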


\subsection{Unconstrained Domains}\label{sec:unconstrainedvarynorm}
Now, with Lemma \ref{thm:constrainedftrl} in hand, we will proceed to build an algorithm that achieves the bound (\ref{eqn:varynorm}) in the \emph{unconstrained} setting. Our method for the unconstrained setting is very similar to the dimension-free to one-dimensional optimization reduction proposed by \cite{cutkosky2018black}, taking into account the particular dynamics of our FTRL algorithm. Intuitively, we use a one-dimensional parameter-free algorithm to learn a scaling of the FTRL algorithm, which corresponds to a kind of learning rate parameter. The pseudocode for this technique is presented in Algorithm \ref{alg:1dred} below.

\begin{algorithm}
   \caption{Unconstrained Varying Norms Adaptivity}
   \label{alg:1dred}
\begin{algorithmic}
   \STATE{\bfseries Input: } one-dimensional parameter-free online learning algorithm $\ol$, sequence of norms $\|\cdot\|_0,\dots,\|\cdot\|_{T-1}$, real vector space $W$, strong-convexity parameter $\sigma$.
   \STATE Set $\psi_0(x) = \frac{1}{\sqrt{2\sigma}}\|x\|_0^2$.
   \STATE Set $x_1=\argmin_{w\in W} \psi_0(w)$.
   \FOR{$t=1\dots T$}
   \STATE Get $y_t\in \R$ from $\ol$.
   \STATE Output $w_t = y_tx_t$ and get $g_t$.
   \STATE Set $\psi_t(x)=\left\{ \begin{array}{lr} \frac{1}{\sqrt{2\sigma}}\|x\|_t^2\sqrt{1+\sum_{i=1}^t \|g_i\|_{i-1, \star}^2}& \text{if }\|x\|_t\le 1\\
    \infty &\text{if }\|x\|_t> 1\end{array}\right.$
   \STATE Set $x_{t+1} = \argmin_{w\in W} \psi_t(w) +  \sum_{i=1}^{t}\langle g_i, w\rangle$.
   \STATE Send $s_t=\langle g_t, x_t\rangle$ to $\ol$ as the $t$th loss.
   \ENDFOR
\end{algorithmic}
\end{algorithm}
\begin{restatable}{Lemma}{thmonedred}\label{thm:1dred}
Let $R^{1D}_T$ be the regret of the one-dimensional parameter-free algorithm $\ol$. Under the assumptions of Lemma \ref{thm:constrainedftrl}, the regret of Algorithm \ref{alg:1dred} is bounded by:
\begin{align*}
    R_T(\w)&\le R^{1D}_T(\|\w\|_{T-1}) + \frac{2\|\w\|_{T-1}}{\sqrt{\sigma/2}}\sqrt{1+\sum_{t=1}^{T-1} \|g_t\|_{t-1,\star}^2}
\end{align*}
where $R^{1D}_T$ is the regret of $\ol$ on the losses $s_t$. Further, we have $s_t^2\le \|g_t\|_{t-1,\star}^2\le 1$. In particular, if $\ol$ achieves the regret bound (\ref{eqn:1dregretbound}), this yields an overall regret of:
\begin{align*}
    R_T(\w)&\le O\left[\epsilon+\frac{2\|\w\|_{T-1}}{\min(1,\sqrt{\sigma})}\max\left(\sqrt{1+\sum_{t=1}^T \|g_t\|_{t-1,\star}^2\log\left(1+\frac{\sum_{t=1}^T \|g_t\|_{t-1,\star}^2\|\w\|_{T-1}}{\epsilon}\right)},\ \right.\right.\\
    &\quad\quad\quad\quad\quad\quad\quad\left.\left.\log\left(1+\frac{\sum_{t=1}^T \|g_t\|_{t-1,\star}^2\|\w\|_{T-1}}{\epsilon}\right)\right)\right]
\end{align*}
\end{restatable}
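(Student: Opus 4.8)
The plan is to view Algorithm \ref{alg:1dred} as producing a ``product'' iterate $w_t = y_t x_t$, where $x_t$ comes from the constrained FTRL of Lemma \ref{thm:constrainedftrl} and $y_t$ comes from the one-dimensional parameter-free learner $\ol$, and to split the regret accordingly. The key algebraic identity is the telescoping decomposition
\begin{align*}
\sum_{t=1}^T \langle g_t, w_t - \w\rangle
= \sum_{t=1}^T \langle g_t, y_t x_t - \|\w\|_{T-1}\, x_t\rangle
 + \sum_{t=1}^T \langle g_t, \|\w\|_{T-1}\, x_t - \w\rangle.
\end{align*}
The first sum is exactly $\sum_{t=1}^T s_t(y_t - \|\w\|_{T-1})$ with $s_t = \langle g_t, x_t\rangle$, which is by definition $R^{1D}_T(\|\w\|_{T-1})$, the regret of $\ol$ against the comparator $\|\w\|_{T-1}$. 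So the first piece is controlled for free by the assumed regret bound on $\ol$.

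The second piece is where the work is. Write $\x_t = \|\w\|_{T-1}\, x_t$ and $\x := \w/\|\w\|_{T-1}$ (assuming $\|\w\|_{T-1}\neq 0$; the zero case is trivial since then $\w=0$ as $\|\cdot\|_{T-1}$ is a norm). Then $\sum_t \langle g_t, \x_t - \w\rangle = \|\w\|_{T-1}\sum_t \langle g_t, x_t - \x\rangle$. Now $\|\x\|_{T-1} = 1$, so $\x$ is a legal comparator for the constrained FTRL, and I would invoke Lemma \ref{thm:constrainedftrl} (with the $\sqrt{2\sigma}$ normalization of the regularizer in Algorithm \ref{alg:1dred} in place of $\sqrt\sigma$) to bound $\sum_t \langle g_t, x_t - \x\rangle$ by $\frac{1}{\sqrt{\sigma/2}}\bigl(\|\x\|_{T-1}^2\sqrt{1+\sum_{t=1}^{T-1}\|g_t\|_{t-1,\star}^2} + \sqrt{\sum_{t=1}^T\|g_t\|_{t-1,\star}^2}\bigr)$. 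Since $\|\x\|_{T-1}=1$ and both radicals are at most $\sqrt{1+\sum_{t=1}^T \|g_t\|_{t-1,\star}^2}$ (here I also need $\|g_t\|_{t-1,\star}\le 1$ from the hypothesis of Lemma \ref{thm:constrainedftrl}), multiplying back by $\|\w\|_{T-1}$ yields the $\frac{2\|\w\|_{T-1}}{\sqrt{\sigma/2}}\sqrt{1+\sum_{t=1}^{T-1}\|g_t\|_{t-1,\star}^2}$ term, completing the first displayed bound. The claim $s_t^2 = \langle g_t, x_t\rangle^2 \le \|g_t\|_{t-1,\star}^2\|x_t\|_{t-1}^2 \le \|g_t\|_{t-1,\star}^2 \le 1$ follows from Cauchy--Schwarz for the seminorm pairing together with the guarantee $\|x_t\|_{t-1}\le 1$ from Lemma \ref{thm:constrainedftrl} (recall $x_t = w_t$ in the notation of that lemma, relabeled).

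For the final explicit bound, I would substitute the parameter-free regret bound (\ref{eqn:1dregretbound}) of Theorem \ref{thm:1dolo} with $Z=1$, comparator $|\w| \mapsto \|\w\|_{T-1}$, and $\sum_{t=1}^T g_t^2 \mapsto \sum_{t=1}^T s_t^2 \le \sum_{t=1}^T \|g_t\|_{t-1,\star}^2$; since (\ref{eqn:1dregretbound}) is monotone in $\sum g_t^2$, this replacement is valid. Then I combine the two $\max$ arguments with the single leftover FTRL term $\frac{2\|\w\|_{T-1}}{\sqrt{\sigma/2}}\sqrt{1+\sum \|g_t\|_{t-1,\star}^2}$, absorbing it into the first argument of the $\max$ up to a constant, and fold the $\min(1,\sqrt\sigma)$ into the denominator; the logarithmic argument simplifies because $\sqrt{1+\sum s_t^2}\cdot(\text{that quantity})^{Z/2}$ at $Z=1$ is of order $\sqrt{1+\sum\|g_t\|_{t-1,\star}^2}\le 1+\sum\|g_t\|_{t-1,\star}^2$, and one can loosen constants and $\log$ arguments freely under the $\tilde O$/explicit-constant conventions announced in Section \ref{sec:background}. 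The main obstacle is purely bookkeeping: matching the $\sqrt\sigma$ versus $\sqrt{2\sigma}$ normalizations, tracking that the two radicals $\sqrt{1+\sum_{t=1}^{T-1}(\cdot)}$ and $\sqrt{\sum_{t=1}^{T}(\cdot)}$ are each dominated by $\sqrt{1+\sum_{t=1}^T(\cdot)}$, and verifying the logarithmic term of Theorem \ref{thm:1dolo} collapses to the stated form when $Z=1$ — no conceptual difficulty, but the constants must be handled carefully (the fully explicit version is deferred to the appendix).
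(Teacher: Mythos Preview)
Your proposal is correct and follows essentially the same approach as the paper: the same add-and-subtract decomposition $\sum_t\langle g_t, y_tx_t-\w\rangle = \sum_t s_t(y_t-\|\w\|_{T-1}) + \|\w\|_{T-1}\sum_t\langle g_t, x_t-\w/\|\w\|_{T-1}\rangle$, the same appeal to Lemma~\ref{thm:constrainedftrl} for the FTRL piece at the unit-norm comparator, the same Cauchy--Schwarz bound $|s_t|\le\|g_t\|_{t-1,\star}\|x_t\|_{t-1}\le 1$, and the same substitution of Theorem~\ref{thm:1dolo}. Your bookkeeping remarks about the $\sqrt{2\sigma}$ versus $\sqrt{\sigma}$ normalization and the $T-1$ versus $T$ index in the radical are exactly the small points the paper handles implicitly.
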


\subsection{Adding Constraints}\label{sec:constrainedvarynorm}
Algorithm \ref{alg:1dred} provides a method for obtaining the bound (\ref{eqn:varynorm}) when $W$ is an entire vector space, so in this section we show how to fix the algorithm so that $W$ may be an arbitrary convex domain. We do this by again appealing to a technique from \citep{cutkosky2018black}. This time, we use their Theorem 3, which provides a way to produce constrained algorithms from unconstrained algorithms. The original result considers only the case of a fixed norm and is applied to achieve bounds like (\ref{eqn:constantnorm}). Here we tweak the analysis to consider varying norms as well. The algorithm is presented in Algorithm \ref{alg:varynorm} below, and the analysis achieving (\ref{eqn:varynorm}) is in Theorem \ref{thm:varynorm}.

\begin{algorithm}
   \caption{Varying Norms Adaptivity}
   \label{alg:varynorm}
\begin{algorithmic}
   \STATE{\bfseries Input: } Convex domain $W$ in a real vector space $V$.
   \STATE Define $\Pi_t(v)=\argmin_{w\in W}\|v-w\|_{t-1}$.
   \STATE Define $S_t(v) = \|v-\Pi_t(v)\|_{t-1}$.
   \STATE Initialize Algorithm \ref{alg:1dred} with domain $V$ using the algorithm of Theorem \ref{thm:1dolo} as the base learner.
   \FOR{$t=1\dots T$}
   \STATE Get $t$th output $v_t\in V$ from Algorithm \ref{alg:1dred}.
   \STATE Output $w_t = \Pi_t(v_t)$, and get loss $g_t$.
   \STATE Define $\ell_t(v) = \frac{1}{2}\left(\langle g_t, v\rangle + \|g_t\|_{t-1,\star}S_t(v)\right)$.
   \STATE Let $\hat g_t \in \partial \ell_t(v_t)$, and send $\hat g_t$ to Algorithm \ref{alg:1dred} as the $t$th loss.
   \ENDFOR
\end{algorithmic}
\end{algorithm}
\begin{restatable}{Theorem}{thmvarynorm}\label{thm:varynorm}
Each output $w_t$ of Algorithm \ref{alg:varynorm} lies in $W$, and the regret for any $\w\in W$ is at most:
\begin{align*}
    R_T(\w)&\le  O\left[\epsilon+\frac{\|\w\|_{T-1}}{\min(1,\sqrt{\sigma})}\max\left(\sqrt{1+\sum_{t=1}^T \|g_t\|_{t-1,\star}^2\log\left(1+\frac{\sum_{t=1}^T \|g_t\|_{t-1,\star}^2\|\w\|_{T-1}}{\epsilon}\right)},\ \right.\right.\\
    &\quad\quad\quad\quad\quad\quad\quad\left.\left.\log\left(1+\frac{\sum_{t=1}^T \|g_t\|_{t-1,\star}^2\|\w\|_{T-1}}{\epsilon}\right)\right)\right]
\end{align*}
\end{restatable}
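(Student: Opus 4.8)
The plan is to derive Theorem~\ref{thm:varynorm} from Lemma~\ref{thm:1dred} by means of a per-round ``projection inequality,'' following the unconstrained-to-constrained reduction of \cite{cutkosky2018black} but with the twist that the relevant (semi)norm at round $t$ is $\|\cdot\|_{t-1}$ rather than a single fixed norm. The target reduction is: for every $\w\in W$ and every $t$, $\langle g_t, w_t-\w\rangle\le 2\langle \hat g_t, v_t-\w\rangle$. Summing this over $t$ shows that the regret $R_T(\w)$ of Algorithm~\ref{alg:varynorm} is at most twice the regret of the internal instance of Algorithm~\ref{alg:1dred} against the comparator $\w$ on the surrogate loss sequence $\hat g_t$, after which Lemma~\ref{thm:1dred} finishes the job.

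First I would collect the elementary facts about the surrogates. Since $W$ is convex, $S_t(v)=\inf_{w\in W}\|v-w\|_{t-1}$ is a convex function of $v$ (an infimal convolution of $\|\cdot\|_{t-1}$ with the indicator of $W$), it is $1$-Lipschitz with respect to $\|\cdot\|_{t-1}$, and $S_t(\w)=0$ whenever $\w\in W$. Hence $\ell_t$ is convex and has a subgradient $\hat g_t=\tfrac{1}{2}g_t+\tfrac{1}{2}\|g_t\|_{t-1,\star}s_t$ at $v_t$ with $s_t\in\partial S_t(v_t)$; the $1$-Lipschitz property forces $\|s_t\|_{t-1,\star}\le 1$ (the subgradient annihilates $\ker\|\cdot\|_{t-1}$, so its dual seminorm is finite and $\le 1$), and therefore $\|\hat g_t\|_{t-1,\star}\le\|g_t\|_{t-1,\star}\le 1$. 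Thus $\hat g_t$ is an admissible input to Algorithm~\ref{alg:1dred} (it satisfies the hypothesis of Lemma~\ref{thm:constrainedftrl}), and $w_t=\Pi_t(v_t)\in W$ by construction.

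For the projection inequality itself, write $\langle g_t, w_t-\w\rangle=\langle g_t, w_t-v_t\rangle+\langle g_t, v_t-\w\rangle$. Because $w_t=\Pi_t(v_t)$, the first term is at most $\|g_t\|_{t-1,\star}\|v_t-w_t\|_{t-1}=\|g_t\|_{t-1,\star}S_t(v_t)$, and convexity of $S_t$ together with $S_t(\w)=0$ gives $S_t(v_t)\le\langle s_t, v_t-\w\rangle$. Since $\|g_t\|_{t-1,\star}\ge 0$, combining these yields $\langle g_t, w_t-\w\rangle\le\langle g_t+\|g_t\|_{t-1,\star}s_t,\,v_t-\w\rangle=2\langle\hat g_t, v_t-\w\rangle$; the case $v_t\in W$ is identical with $S_t(v_t)=0$, so no case analysis is needed. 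The factor $\tfrac{1}{2}$ in $\ell_t$ is exactly what makes a single surrogate simultaneously preserve the gradient-norm bound and satisfy this inequality.

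Finally I would invoke Lemma~\ref{thm:1dred} on the losses $\hat g_t$: it bounds $\sum_t\langle\hat g_t, v_t-\w\rangle$ by the stated expression with every $\|g_t\|_{t-1,\star}$ replaced by $\|\hat g_t\|_{t-1,\star}$; since $\|\hat g_t\|_{t-1,\star}\le\|g_t\|_{t-1,\star}$ and that bound is monotone in these quantities, replacing them back by $\|g_t\|_{t-1,\star}$ only enlarges the $O(\cdot)$. Multiplying by $2$ and absorbing constants completes the proof. The one genuinely delicate point is the seminorm bookkeeping in the projection step --- non-uniqueness of $\Pi_t$, possibly-infinite dual seminorms, and the behaviour of $S_t$ on $\ker\|\cdot\|_{t-1}$ --- but none of it causes trouble given the standing assumption $\|g_t\|_{t-1,\star}\le 1$ and the $1$-Lipschitzness of $S_t$; everything about the FTRL regret, the strong convexity of the regularizers, and the one-dimensional reduction is inherited directly from Lemmas~\ref{thm:constrainedftrl} and~\ref{thm:1dred}.
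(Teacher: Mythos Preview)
Your proposal is correct and follows essentially the same route as the paper: reduce to Algorithm~\ref{alg:1dred} via the per-round inequality $\langle g_t, w_t-\w\rangle\le 2\langle\hat g_t, v_t-\w\rangle$, obtained by bounding $\langle g_t, w_t-v_t\rangle\le\|g_t\|_{t-1,\star}S_t(v_t)$ and then using convexity of $S_t$ (equivalently of $\ell_t$) together with $S_t(\w)=0$, and finish by invoking Lemma~\ref{thm:1dred} with $\|\hat g_t\|_{t-1,\star}\le\|g_t\|_{t-1,\star}$. Your write-up is in fact a bit more careful than the paper's about the seminorm bookkeeping (subgradients of $S_t$ annihilating $\ker\|\cdot\|_{t-1}$), but the argument is the same.
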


\section{Full-Matrix Bounds}\label{sec:fullmatrix}
The results of the previous section operate with arbitrary norms and in potentially infinite dimensional spaces. In this section and the next, we will specialize to the case $W\subset \R^d$, and show how to obtain so-called ``full-matrix'' or ``preconditioned'' regret bounds. In this section, we will consider the full-matrix regret bound given by (\ref{eqn:fullmatrix}).

Up to a factor of $\sqrt{\log(T)}$, this bound is achieved in the case where $W$ is an entire vector space by \cite{cutkosky2018black}, and similar bounds utilizing various extra assumptions are obtained by \cite{kotlowski2019scale, cesa2005second}. When $W$ is not an entire vector space, it seems harder to achieve this bound. However, some progress has been made in certain settings. For example, when $W$ is the probability simplex, \cite{foster2017zigzag} achieves a bound $\sqrt{rT}$, which adapts automatically to $r$. For more general $W$, \cite{koren2017affine} achieves the desired result if their algorithm is tuned with oracle knowledge of $\sum_{t=1}^T \langle g_t, \w\rangle^2$.

Perhaps surprisingly, a relatively straightforward application of Theorem \ref{thm:varynorm} allows us to obtain (\ref{eqn:fullmatrix}), up to a factor of $\log(T)$. Note that this is $\sqrt{\log(T)}$ worse than \cite{cutkosky2018black}, but we are able to handle arbitrary convex domains.

The key idea in our approach is that the norms $\|\cdot\|_t$ used by Algorithm \ref{alg:varynorm} need not be specified ahead of time: so long as $\|\cdot\|_t$ depends only on $g_1,\dots,g_t$, it is still possible to run the algorithm. Next, observe that $\sum_{t=1}^T \langle g_t, \w\rangle^2$ can be viewed as $\|\w\|_{G_T}^2$, where we recall that $\|\cdot\|_{G_T}$ is the norm induced by $G_T$: $\|x\|_{G_T}^2=x^\top G_Tx$. Inspired by these observations, our approach is to run Algorithm \ref{alg:varynorm} using norms $\|\cdot\|_t=\|\cdot\|_{G_t}$. The algorithm is analyzed in Theorem \ref{thm:fullmatrix} below.

\begin{restatable}{Theorem}{thmfullmatrix}\label{thm:fullmatrix}
Suppose $g_t$ satisfies $\|g_t\|\le 1$ for all $t$ where $\|\cdot\|$ is any norm such that $\frac{1}{2}\|\cdot\|^2$ is $\sigma$-strongly convex with respect to $\|\cdot\|$. Let $G_t = \sum_{i=1}^t g_ig_i^\top$ and let $r$ be the rank of $G_T$. Suppose we run Algorithm \ref{alg:varynorm} with $\|x\|_t^2=\|x\|^2+x^\top(I + G_t)x$, where $I$ is the identity matrix. Then we obtain regret $R_T(\w)$ bounded by:
\begin{align*}
     &O\left[\sqrt{\frac{\|\w\|^2+\|\w\|_{\mathbf{2}}^2+\sum_{t=1}^T\langle g_t, \w\rangle^2}{\min(\sigma, 1)}}\max\left(\log\left(1+\frac{r\log(T)\sqrt{\|\w\|^2+\|\w\|_{\mathbf{2}}^2+\sum_{t=1}^T\langle g_t, \w\rangle^2}}{\epsilon}\right),\right.\right.\\
    &\quad\quad\quad\quad\quad\left.\left.\sqrt{r\log(T)\log\left(1+\frac{r\log(T)\sqrt{\|\w\|^2+\|\w\|_{\mathbf{2}}^2+\sum_{t=1}^T\langle g_t, \w\rangle^2}}{\epsilon}\right)}\right)+\epsilon\right]
\end{align*}
\end{restatable}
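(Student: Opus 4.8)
The plan is to feed the norms $\|x\|_t^2=\|x\|^2+x^\top(I+G_t)x$ directly into Theorem~\ref{thm:varynorm} and then post-process the resulting bound. The content reduces to three things: (i) checking these norms are admissible for Algorithm~\ref{alg:varynorm}; (ii) bounding $\sum_{t=1}^T\|g_t\|_{t-1,\star}^2$ by an elliptic-potential (log-determinant) estimate; and (iii) rewriting $\|\w\|_{T-1}^2$ explicitly and simplifying.

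For (i): the sequence is increasing because $G_{t-1}\preceq G_t$ (we add the PSD matrix $g_tg_t^\top$), so $x^\top(I+G_{t-1})x\le x^\top(I+G_t)x$ and hence $\|x\|_{t-1}\le\|x\|_t$. For strong convexity, split $\tfrac12\|\cdot\|_t^2=\tfrac12\|\cdot\|^2+\tfrac12\langle\cdot,(I+G_t)\cdot\rangle$: the first term is $\sigma$-strongly convex with respect to $\|\cdot\|$ by hypothesis, and the second is a positive-definite quadratic form (since $I+G_t\succeq I$), hence $1$-strongly convex with respect to $\sqrt{\langle\cdot,(I+G_t)\cdot\rangle}$; adding the two defining inequalities and using $\min(\sigma,1)(\|z\|^2+\langle z,(I+G_t)z\rangle)\le\sigma\|z\|^2+\langle z,(I+G_t)z\rangle$ shows $\tfrac12\|\cdot\|_t^2$ is $\min(\sigma,1)$-strongly convex with respect to $\|\cdot\|_t$, with the \emph{same} constant for every $t$, which is what Algorithm~\ref{alg:varynorm} needs. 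Finally $\|x\|_{t-1}^2\ge\|x\|^2$ and $\|x\|_{t-1}^2\ge x^\top(I+G_{t-1})x\ge\|x\|_{\mathbf{2}}^2$, so the unit ball of $\|\cdot\|_{t-1}$ sits inside those of $\|\cdot\|$ and $\|\cdot\|_{\mathbf{2}}$; the normalization on the losses then gives $\|g_t\|_{t-1,\star}\le 1$, so Theorem~\ref{thm:varynorm} applies and every $w_t\in W$.

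For (ii): since $\|x\|_{t-1}^2\ge x^\top(I+G_{t-1})x$ we have $\|g_t\|_{t-1,\star}^2\le g_t^\top(I+G_{t-1})^{-1}g_t$. By the matrix determinant lemma $\det(I+G_t)=\det(I+G_{t-1})\bigl(1+g_t^\top(I+G_{t-1})^{-1}g_t\bigr)$, and since $g_t^\top(I+G_{t-1})^{-1}g_t\le\|g_t\|_{\mathbf{2}}^2\le 1$ we may apply $z\le\log(1+z)/\log 2$ on $[0,1]$ and telescope to obtain $\sum_{t=1}^T\|g_t\|_{t-1,\star}^2\le\log\det(I+G_T)/\log 2$. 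Here is where the rank enters: $G_T$ has rank $r$ and trace $\sum_t\|g_t\|_{\mathbf{2}}^2\le T$, so its nonzero eigenvalues $\lambda_1,\dots,\lambda_r$ lie in $(0,T]$ and $\det(I+G_T)=\prod_{i=1}^r(1+\lambda_i)\le(1+T)^r$, whence $\sum_{t=1}^T\|g_t\|_{t-1,\star}^2=O(r\log T)$ rather than $O(d\log T)$. For (iii), $\|\w\|_{T-1}^2=\|\w\|^2+\w^\top(I+G_T)\w=\|\w\|^2+\|\w\|_{\mathbf{2}}^2+\sum_{t=1}^T\langle g_t,\w\rangle^2$ since $\w^\top G_T\w=\sum_t\langle g_t,\w\rangle^2$.

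Substituting these two facts into the bound of Theorem~\ref{thm:varynorm} (run with the base learner of Theorem~\ref{thm:1dolo}, which produces the $\epsilon$-dependent form) and cleaning up the logarithmic factors and constants — in particular using $\sqrt{1+O(r\log T)\cdot L}=O\bigl(1+\sqrt{r\log T\cdot L}\bigr)$ with $L=\log(1+r\log(T)\|\w\|_{T-1}/\epsilon)$ — yields the claimed bound. The only genuinely content-bearing step is the elliptic-potential estimate in (ii), and specifically the observation that $\log\det(I+G_T)$ contributes only $r$ logarithmic terms because $G_T$ has rank $r$; the rest is hypothesis-checking and routine log/constant bookkeeping, so I expect the main (modest) obstacle to be getting the final expression into exactly the stated shape rather than any conceptual difficulty.
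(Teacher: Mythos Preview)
Your approach is the paper's: verify the hypotheses of Theorem~\ref{thm:varynorm}, bound $\sum_t\|g_t\|_{t-1,\star}^2$ by a log-determinant, and read off $\|\w\|_{T-1}$. Two small points, one cosmetic and one that needs a fix.

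Cosmetic: in (iii), $\|\w\|_{T-1}^2=\|\w\|^2+\w^\top(I+G_{T-1})\w$, not $G_T$; replace $=$ by $\le$ and nothing else changes since the stated bound uses the larger $G_T$ version anyway.

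Substantive: in (ii) you assert $g_t^\top(I+G_{t-1})^{-1}g_t\le\|g_t\|_{\mathbf{2}}^2\le 1$, but $\|g_t\|_{\mathbf{2}}\le 1$ is not among the hypotheses. The assumption concerns the abstract norm $\|\cdot\|$ (and in fact the paper's own proof uses $\|g_t\|_\star\le 1$, so the statement appears to have a missing $\star$); for general $\|\cdot\|$ the Euclidean norm of $g_t$ need not be at most $1$, and without that your $z\le\log(1+z)/\log 2$ step breaks. The paper sidesteps this with a slightly sharper dual-norm bound: from $\|g_t\|_\star\le 1$ one has $\langle g_t,x\rangle^2\le\|x\|^2$, so the $\|x\|^2$ piece of $\|x\|_{t-1}^2$ absorbs the new rank-one term and in fact $\|x\|_{t-1}^2\ge x^\top(I+G_t)x$, giving $\|g_t\|_{t-1,\star}^2\le g_t^\top(I+G_t)^{-1}g_t$ with $G_t$ rather than $G_{t-1}$. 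Then Lemma~11 of \cite{hazan2007logarithmic} yields $\sum_t g_t^\top(I+G_t)^{-1}g_t\le\log\det(I+G_T)\le r\log(T+1)$ directly, with no per-term boundedness needed. This shift from $G_{t-1}$ to $G_t$ is the only real divergence between your argument and the paper's; everything else (monotonicity, the $\min(\sigma,1)$ strong-convexity via summing the two quadratic pieces, and the final substitution) matches.
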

\begin{proof}
We have $\|x\|_t^2=\|\x\|^2x^\top(I+G_t)x=\|x\|+\|x\|_{\mathbf{2}}^2+\sum_{i=1}^t\langle g_t, x\rangle^2$,
so that $\|\cdot\|_t$ is increasing in $t$. Further, since $\|x\|_{t-1}\ge \|x\|$, we must have $\|g_t\|_{t-1,\star}\le 1$ for all $t$.  Next, observe that since $\|g_t\|_\star\le 1$, we have
\begin{align*}
\|x\|^2+\|x\|_{\mathbf{2}}^2+\sum_{i=1}^{t-1}\langle g_i, x\rangle^2 \ge\|x\|_{\mathbf{2}}^2+\sum_{i=1}^{t}\langle g_i, x\rangle^2= x^\top(I+G_t)x
\end{align*}
Therefore, we have $\|g_t\|_{t-1,\star} \le g_t^\top (I+ G_t)^{-1}g_t$. Now recall that for any PSD matrix $M$, $\frac{1}{2}x^\top Mx$ is 1-strongly convex with respect to the norm $\sqrt{x^\top Mx}$. Therefore, by Lemma \ref{thm:addnorm}, we have that $\frac{1}{2}\|x\|_t^2$ is $\min(\sigma, 1)$-strongly convex with respect to $\|\cdot\|_t$ so that we have satisfied all the hypotheses of Theorem \ref{thm:varynorm}. Finally, before we apply Theorem \ref{thm:varynorm}, we need to analyze
\begin{align*}
    \sum_{t=1}^T \|g_t\|_{t-1,\star}^2 \le \sum_{t=1}^T g_t^\top(I+G_t)^{-1} g_t
    &\le \log\left(\frac{\det(I+\sum_{i=1}^t g_tg_t^\top)}{\det(I)}\right)\\
    &\le \text{rank}(G_T)\log(T+1)
\end{align*}
where we have applied Lemma 11 of \cite{hazan2007logarithmic}. The result now follows from Theorem \ref{thm:varynorm}.
\end{proof}

Note that for concreteness, if we set $\|\cdot\|=\|\cdot\|_{\mathbf{2}}$ in the above bound, then the norms $\|\cdot\|_t$ become the familiar matrix-based norm $\|x\|_t= \sqrt{x^\top(2I + G_t)x}$. We have opted to leave the more general formulation in place to allow for $g_t$ that are not bounded in the $L_2$ norm.


\section{Full-Matrix Adagrad with Oracle Tuning}\label{sec:adagrad}
In this section we consider a different kind of full-matrix bound inspired by the full-matrix AdaGrad algorithm \citep{duchi10adagrad}. Full-matrix AdaGrad can be described as FTRL using regularizers:\footnote{In \cite{duchi10adagrad}, this version of AdaGrad is called the Primal-Dual update version.}
\begin{align*}
    \psi_t(x) = \frac{1}{\eta}\langle x, (I+G_t)^{1/2}, x\rangle
\end{align*}
where $\eta$ is a scalar learning rate parameter that must be set by the user. $(I+G_t)^{1/2}$ indicates the symmetric positive-definite matrix square-root of $I+G_t$, which exists since $I+G_t$ is a symmetric positive-definite matrix. This algorithm is empirically very successful, in spite of the significant computational overhead coming from manipulating the $d\times d$ matrix $G_t$. Indeed, much work has gone into providing approximate versions of this algorithm that reduce the computation load while still retaining some of the performance benefits \citep{gupta2018shampoo,agarwal2019efficient,chen2019extreme}. Prior analyses of full-matrix AdaGrad considers domains $W$ with finite diameter $D=\sup_{x,y\in W}\|x-y\|_{\mathbf{2}}$, and suggests setting $\eta=O(D)$ to obtain a regret bound of:
\begin{align*}
    R_T(\w)&\le O(D\trace(G_T^{1/2}))
\end{align*}
However, by linearity of trace and concavity of square root, we have:
\begin{align*}
    D\trace(G_T^{1/2})\ge D\sqrt{\trace G_T}=D\sqrt{\sum_{t=1}^T \|g_t\|_{\mathbf{2}}^2}
\end{align*}
The bound $R_T(\w)\le D\sqrt{\sum_{t=1}^T \|g_t\|_{\mathbf{2}}^2}$ can be achieved by simple (and fast) online gradient descent with a scalar learning rate, $w_{t+1}=w_t - \frac{Dg_t}{\sqrt{\sum_{i=1}^t \|g_t\|_{\mathbf{2}}^2}}$, so the prior regret bound of full-matrix AdaGrad does not appear to show any benefit gained by the extra matrix computations. This poses a mystery: since the actual algorithm is so effective, it seems we are missing something in the analysis. We propose a possible explanation for this quandary. The main idea is that, in practice, the theoretical guidance to set $\eta=O(D)$ is rarely used. Instead, $\eta$ is tuned via manually checking different values to find which is empirically best. Thus, if we could show that full-matrix AdaGrad achieves gains with an oracle-tuning for $\eta$, this might explain the improved performance in practice.

To this end, recall that from Theorem \ref{thm:ftrl} we can write the regret of full-matrix AdaGrad as:
\begin{align*}
    R_T(\w)&\le O\left(\frac{ \w^\top (I+G_{T})^{1/2}\w}{\eta} +\eta \sum_{t=1}^T g_t^\top (I+G_{t-1})^{-1/2}g_t\right)\le O\left(\frac{ \w^\top G_T^{1/2}\w}{\eta} +\eta \trace(G_T^{1/2})\right)
\end{align*}
where the second inequality is due to Lemma 10 of \cite{duchi10adagrad}, and we have ignored the dependence $I$ for simpler exposition. Then it is clear that with the optimal tuning of $\eta = O\left(\sqrt{\frac{\langle \w, G_T^{1/2}\w\rangle}{ \trace(G_T^{1/2})}}\right)$, we obtain regret bound of (\ref{eqn:adagradbound}).
In order to appreciate the potential of this bound, let us construct a particular sequence of $g_t$s and evaluate the bound. We will compare the bound (\ref{eqn:adagradbound}) to (\ref{eqn:fullmatrix}) as well as to (\ref{eqn:constantnorm}) with the $L_2$ norm. Our example will illustrate that (\ref{eqn:adagradbound}) can in some sense adapt to the case that $G_T$ is full-rank but ``approximately low rank'', while the analysis of the full-matrix algorithm in Section \ref{sec:fullmatrix} does not obviously allow for such behavior.

Let $v_1,\dots,v_d$ be an orthonormal basis for the $d$-dimensional vector space containing $W$. Assume $d$ is a perfect square and $T=2d+2k\sqrt{d}$ for some integer $k$. For the first $d$ rounds, $g_t=v_t$ and for the second $d$ rounds $g_{d+t}=-v_{t}$. For the remaining rounds, we write $t=i+j\sqrt{d}+2d$ for $j\in \Z$ and $1\le i\le \sqrt{d}$, and set $g_{t}=\frac{1}{\sqrt{d}}v_d + \left((-1)^{j}\sqrt{1-\frac{1}{d}}\right)v_{i}$. Intuitively, the losses are cycling with alternating signs through the first $\sqrt{d}$ basis vectors, but always maintain a small positive component in the direction of $v_d$. Notice that since $T-2d$ is a multiple of $2\sqrt{d}$, the alternating signs imply that $\sum_{t=1}^T g_t$ is a positive scalar multiple of $v_d$. Consider $\w=-v_d$. Then, we have:
\begin{align*}
    \|\w\|_2\sqrt{\sum_{t=1}^T\|g_t\|_2^2}&=O(\sqrt{T})\\
    \sqrt{\text{rank}(G_T)\sum_{t=1}^T \langle g_t, w_t\rangle^2}&=O(\sqrt{T})\\
    \sqrt{\langle \w, G_T^{1/2}\w\rangle\trace (G_T^{1/2})}&=O\left(\sqrt{T/d^{1/4}+\sqrt{dT}}\right)
\end{align*}
In this case, the trace of $\sqrt{\sum_{t=1}^T g_tg_t^\top}$ captures the fact that even though the $g_t$ span $d$ dimensions, they are approximately contained in $\sqrt{d}$ dimensions. This allows bound (\ref{eqn:adagradbound}) to perform much better than either of the other bounds. In contrast, if the example is modified so that the first $2d$ rounds only cycle between the first $\sqrt{d}$ basis vectors, we would have $\text{rank}(G_T)=\sqrt{d}$ and so the full-matrix bound (\ref{eqn:fullmatrix}) is the best. Finally, if we increase the component on $v_d$ in each round to, for example, $\frac{1}{\sqrt{2}}$, then the bound (\ref{eqn:constantnorm}) is the smallest. Therefore none of the bounds uniformly dominates the others.

To gain a little more intuition for what the bound \ref{eqn:adagradbound} means, let us investigate the worst-case performance of the bounds (\ref{eqn:constantnorm}), (\ref{eqn:fullmatrix}) and (\ref{eqn:adagradbound}) over all $\w$ with $\|\w\|_{\mathbf{2}}\le 1$. To this end, write $T_{\eff}=\sum_{t=1}^T \|g_t\|_{\mathbf{2}}^2$ and let $\lambda_{\max} = \sup_{\|\w\|\le 1} \sum_{t=1}^T \langle g_t, \w\rangle^2$. Then we clearly have (\ref{eqn:constantnorm}) is $O(\sqrt{T_{\eff}})$ while the bound (\ref{eqn:fullmatrix}) is at most $O(\sqrt{r \lambda_{\max}})$. On the other hand, by Cauchy-Schwarz inequality we have $\trace(G_T^{1/2})\le \sqrt{r_{\eff} T_{\eff}}$ where $r_{\eff}\le r$ is some ``effective rank'' that might be much lower than the true rank $r$. With this notation, we have that the bound (\ref{eqn:adagradbound}) is at most $(\lambda_{\max} r_{\eff} T_{\eff})^{1/4}$. Thus, we see that the new bound is at most the geometric mean of the bounds (\ref{eqn:constantnorm}) and (\ref{eqn:fullmatrix}), but could potentially be much lower if the effective rank $r_{\eff}$ is smaller than $r$.

\subsection{Achieving the Optimal Full-Matrix AdaGrad Bound}

Now that we see there is some potential advantage to a bound like (\ref{eqn:adagradbound}), we will show how to obtain the bound without manually tuning $\eta$ using our framework. The approach is very similar to how we obtained the bound (\ref{eqn:fullmatrix}): we run Algorithm \ref{alg:varynorm} and in round $t$ we set $\|\cdot\|_t = \|\cdot\|_{G_t^{1/2}}$. With this setting, the desired bound is an almost immediate consequence of Theorem \ref{thm:varynorm}:

\begin{restatable}{Theorem}{thmadagrad}\label{thm:adagrad}
Suppose $W\subset \R^d$ and $g_t$ satisfies $\|g_t\|_{\mathbf{2}}\le 1$ for all $t$. Let $G_t= \sum_{i=1}^t g_ig_i^\top$. Define $\|\cdot\|_t$ be $\|x\|_t^2=x^\top (I+G_t)^{1/2} x$. Then the regret of Algorithm \ref{alg:varynorm} using these norms is bounded by:
\begin{align*}
    R_T(\w)&\le \tilde O\left(\sqrt{(\|\w\|_{\textbf{2}}^2 + \w^{\top}G_T^{1/2}\w)\trace\left(G_T^{1/2}\right)}\right)
\end{align*}
where the $\tilde O$ notation hides a logarithmic dependency on $\trace\left(G_T^{1/2}\right)\sqrt{\|\w\|_{\textbf{2}}^2 + \w^{\top}G_T^{1/2}\w}$.
\end{restatable}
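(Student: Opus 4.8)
The plan is to follow the same recipe as the proof of Theorem~\ref{thm:fullmatrix}: verify that the norm sequence $\|x\|_t^2=x^\top(I+G_t)^{1/2}x$ meets every hypothesis of Theorem~\ref{thm:varynorm}, control $\sum_{t=1}^T\|g_t\|_{t-1,\star}^2$ by $O(\trace(G_T^{1/2}))$, and then simplify the resulting expression. As in Section~\ref{sec:fullmatrix}, these norms depend only on $g_1,\dots,g_t$, so Algorithm~\ref{alg:varynorm} is free to use them, and with $G_0=0$ we have $\|\cdot\|_0=\|\cdot\|_{\mathbf 2}$.

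First I would dispatch the structural requirements. Since $G_{t-1}\preceq G_t$ and $A\mapsto A^{1/2}$ is operator monotone, $(I+G_{t-1})^{1/2}\preceq(I+G_t)^{1/2}$, hence $\|\cdot\|_{t-1}\le\|\cdot\|_t$ pointwise. For any positive semidefinite $M$, $\tfrac12 x^\top Mx$ is $1$-strongly convex with respect to $\sqrt{x^\top Mx}$, so each $\tfrac12\|\cdot\|_t^2$ is $1$-strongly convex with respect to $\|\cdot\|_t$ and we may take $\sigma=1$. Finally, because $\|g_t\|_{\mathbf 2}\le 1$ and $(I+G_{t-1})^{-1/2}\preceq I$, we get $\|g_t\|_{t-1,\star}^2=g_t^\top(I+G_{t-1})^{-1/2}g_t\le\|g_t\|_{\mathbf 2}^2\le 1$, so all hypotheses of Theorem~\ref{thm:varynorm} hold.

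Next I would bound the gradient sum. The quantity $\sum_{t=1}^T\|g_t\|_{t-1,\star}^2=\sum_{t=1}^T g_t^\top(I+G_{t-1})^{-1/2}g_t$ is exactly the sum appearing in the primal-dual AdaGrad analysis, which Lemma~10 of \cite{duchi10adagrad} bounds by $O\bigl(\trace((I+G_T)^{1/2}-I)\bigr)$; the scalar inequality $\sqrt{1+\lambda}-1\le\sqrt\lambda$ upgrades to the operator inequality $(I+G_T)^{1/2}-I\preceq G_T^{1/2}$, giving $\sum_{t=1}^T\|g_t\|_{t-1,\star}^2=O(\trace(G_T^{1/2}))$. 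It then remains to simplify $\|\w\|_{T-1}$: using $G_{T-1}\preceq G_T$ together with $(I+G_T)^{1/2}\preceq I+G_T^{1/2}$ (again from $\sqrt{1+\lambda}\le 1+\sqrt\lambda$), we get $\|\w\|_{T-1}^2=\w^\top(I+G_{T-1})^{1/2}\w\le\|\w\|_{\mathbf 2}^2+\w^\top G_T^{1/2}\w$. Plugging $\sigma=1$, the two displayed estimates, and a constant choice of $\epsilon$ into the bound of Theorem~\ref{thm:varynorm}, and collecting the logarithmic factors into $\tilde O$, yields the claimed bound, with the suppressed logarithm being of $\trace(G_T^{1/2})\sqrt{\|\w\|_{\mathbf 2}^2+\w^\top G_T^{1/2}\w}$.

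I do not expect a genuine obstacle, since the argument is a direct specialization of Theorem~\ref{thm:varynorm} to a concrete norm sequence and every step is either elementary spectral calculus or a citation. The one point deserving care is that Lemma~10 of \cite{duchi10adagrad} must be applied to the lagged matrices $G_{t-1}$ rather than $G_t$; if one wants to avoid relying on that form directly, one can first show $g_t^\top(I+G_{t-1})^{-1/2}g_t\le 2\,g_t^\top(I+G_t)^{-1/2}g_t$ whenever $\|g_t\|_{\mathbf 2}\le 1$, at the cost of a harmless constant factor.
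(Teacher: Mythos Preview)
Your proposal is correct and follows essentially the same approach as the paper: verify the hypotheses of Theorem~\ref{thm:varynorm} for the norm sequence $\|x\|_t^2=x^\top(I+G_t)^{1/2}x$, bound $\sum_t\|g_t\|_{t-1,\star}^2$ by $O(\trace(G_T^{1/2}))$ via the AdaGrad lemmas of \cite{duchi10adagrad}, and simplify $\|\w\|_{T-1}$ using $(I+G_T)^{1/2}\preceq I+G_T^{1/2}$. The paper handles the lagged-matrix issue you flag by first invoking their Lemma~9 to pass from $(I+G_{t-1})^{-1/2}$ to $G_t^{-1/2}$ and then applying Lemma~10, which is exactly the alternative you sketch in your final paragraph; your main path (applying Lemma~10 directly to the $I$-shifted sequence) also works and differs only in bookkeeping.
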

This Theorem recovers the desired bound (\ref{eqn:adagradbound}) up to log factors. Moreover, it is possible to interpret the operation of the algorithm as in some rough sense ``learning the optimal learning rate'' required for the original AdaGrad algorithm to achieve this bound.
\begin{proof}
Observe that since $\|g_t\|_{\mathbf{2}}\le 1$, we have $\|g_t\|_{t-1,\star}=\|g_t\|_{(I+G_{t-1})^{-1/2}}\le \|g_t\|_{\mathbf{2}}\le 1$ so that the hypotheses of Theorem \ref{thm:varynorm} are satisfied. In order to complete the analysis we need only calculate:
\begin{align*}
    \sum_{t=1}^T \|g_t\|_{t-1,\star}^2&=\sum_{t=1}^T g_t^\top (I+G_{t-1})^{-1/2}g_t\le \sum_{t=1}^T g_t^\top G_t^{-1/2}g_t\le 2\trace(G_T^{1/2})
\end{align*}
Here, in the first inequality, we mildly abuse of notation to indicate the pseudo-inverse of $G_t^{1/2}$ as $G_{t-1}^{-1/2}$. The inequalities then follow from \cite{duchi10adagrad} Lemmas 9 and 10.

Finally, observe that $(I+G_T)^{1/2}\preceq I + G_T^{1/2}$, and apply Theorem \ref{thm:varynorm} to obtain the result.
\end{proof}

\section{Scale-Invariant Algorithms}\label{sec:scale}
In this section we consider the online linear supervised learning problem in the unconstrained setting, a slight modification of the general online convex optimization paradigm. Now, the losses $\ell_t(w)$ take the form $\ell_t(w)=c_t(\langle f_t, w\rangle)$ where $c_t:\R\to\R$ is a 1-Lipschitz convex function, $f_t\in \R^d$ is called a ``feature vector'', and $f_t$ is revealed to the learner \emph{before} the learner commits to the choice of $w_t$. 
A desirable property for an algorithm in this setting is to be \emph{scale-invariant}, which means the values $\langle f_t, w_t\rangle$ should be unchanged if each component $f_{t,i}$ of the features is rescaled by some unknown value $m_i$ (the $c_t$ functions remain the same). This corresponds to robustness to some kind of ``unit-mismatch'' in the features.
Further, scale-invariance can also be employed in the framework of \cite{cutkosky2019artificial} to produce an algorithm that adapts to an unknown bound on $\|g_t\|_\star$ as well as the unknown value of $\|\w\|$.\footnote{Recall that we have relied on $\|g_t\|_\star\le 1$ in our present analysis.} In this case, the scale-invariant property eliminates a logarithmic dependence on the first loss norm $\|g_1\|_\star$ that is incurred by the original analysis.

Several prior works deal with this problem. The first we are aware of is \cite{ross2013normalized}, who considered a bounded diameter setting. Later, \cite{kempka2019adaptive} improved upon these results to allow for unbounded domains. The more general case of invariance to arbitrary linear transformations was studied by \cite{luo2016efficient} and \cite{kotlowski2019scale} - we provide some results in this setting using our framework in Appendix \ref{sec:scalefull}.


Our approach is again a relatively straightforward application of Algorithm \ref{alg:varynorm}. The key idea is that it is easy to make the FTRL algorithm used in Algorithm \ref{alg:varynorm} scale-invariant. Then, the losses sent to the one-dimensional algorithm will be unchanged by scaling, so that the entire algorithm is scale-invariant. Our algorithm and analysis are presented in Algorithm \ref{alg:diagscale} and Theorem \ref{thm:diagscale}.


\begin{algorithm}
   \caption{Diagonal Scale-Invariance}
   \label{alg:diagscale}
\begin{algorithmic}
   \STATE Initialize $d$ one-dimensional copies of Algorithm \ref{alg:varynorm}.
   \FOR{$t=1\dots T$}
   \FOR{$i=1\dots d$}
   \STATE $m_{t,i}=\sup_{t'\le t} |f_{t',i}|$.
   \STATE Set $\|x\|_{t-1}=m_{t, i}|x|$ and send $\|x\|_{t-1}$ to the $i$th copy of Algorithm \ref{alg:varynorm} as the $t-1$th norm.
   \STATE Get $t$th output $w_{t,i}$ from $i$th copy of Algorithm \ref{alg:varynorm}.
   \ENDFOR
   \STATE Output $w_t=(w_{t,1},\dots,w_{t,d})$ and get loss $\ell_t(\cdot)=c_t(\langle f_t, \cdot\rangle)$.
   \STATE Set $\nabla_t \in \partial c_t(\langle f_t, w_t\rangle)$ and $g_t=\nabla _t f_t\in\partial \ell_t(w_t)$.
   \STATE For each $i$, send $g_{t,i}$ to $i$th copy of Algorithm \ref{alg:varynorm} as $t$th loss.
   \ENDFOR
\end{algorithmic}
\end{algorithm}

\begin{restatable}{Theorem}{thmdiagscale}\label{thm:diagscale}
Suppose $|\nabla_t|\le 1$ for all $t$. Then
Algorithm \ref{alg:diagscale} is scale-invariant with respect to any invertible diagonal linear transformation and achieves regret:
\begin{align*}
    R_T(\w)&\le O\left[d\epsilon+ \sum_{i=1}^d |\w_i|\sqrt{M_{T,i}^2\sum_{t=1}^T \nabla_t^2 \frac{f_{t,i}^2}{m_{t,i}^2}\log\left(1+\frac{\sum_{t=1}^T \nabla_t^2\frac{f_{t,i}^2}{m_{t,i}^2}|\w_i|}{\epsilon}\right)}\right]
\end{align*}
where we define $\frac{0}{0}=0$ and we assume Algorithm \ref{alg:varynorm} will output $0$ for rounds in which $\|\cdot\|_{t-1}$ is 0.
\end{restatable}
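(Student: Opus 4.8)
The plan is to establish the two claims—scale-invariance and the regret bound—separately, in both cases by reducing to the per-coordinate behaviour of Algorithm \ref{alg:varynorm}. The starting observation is that because the losses are linear the total regret splits across coordinates: $R_T(\w) = \sum_{t=1}^T \langle g_t, w_t - \w\rangle = \sum_{i=1}^d \sum_{t=1}^T g_{t,i}(w_{t,i}-\w_i) = \sum_{i=1}^d R^{(i)}_T(\w_i)$, where $R^{(i)}_T(\w_i)$ is the regret incurred by the $i$th copy of Algorithm \ref{alg:varynorm}, and $g_t = \nabla_t f_t \in \partial \ell_t(w_t)$ so this quantity also upper bounds $\sum_t \ell_t(w_t)-\ell_t(\w)$.

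For the regret bound, fix $i$ and view the $i$th copy as a one-dimensional instance of Algorithm \ref{alg:varynorm} run with norms $\|x\|_{t-1} = m_{t,i}|x|$ and losses $g_{t,i} = \nabla_t f_{t,i}$. First I would check the hypotheses of Theorem \ref{thm:varynorm}: the norms are increasing since $t\mapsto m_{t,i}$ is non-decreasing; $\tfrac12\|x\|_{t-1}^2 = \tfrac12 m_{t,i}^2 x^2$ is exactly $1$-strongly convex with respect to $\|\cdot\|_{t-1}$, so $\sigma=1$; and $\|g_{t,i}\|_{t-1,\star} = |\nabla_t||f_{t,i}|/m_{t,i} \le |\nabla_t| \le 1$ since $|f_{t,i}|\le m_{t,i}$. (Rounds with $m_{t,i}=0$ are degenerate: then $f_{t,i}=0$, hence $g_{t,i}=0$ and the copy outputs $0$ by convention, contributing nothing, so they may be discarded; this is where the $\tfrac{0}{0}=0$ convention is used.) Substituting $\|\w_i\|_{T-1}=m_{T,i}|\w_i|=M_{T,i}|\w_i|$ and $\|g_{t,i}\|_{t-1,\star}^2 = \nabla_t^2 f_{t,i}^2/m_{t,i}^2$ into the bound of Theorem \ref{thm:varynorm}, then summing over $i=1,\dots,d$ (each copy contributing an $O(\epsilon)$ term, for $O(d\epsilon)$ total, and absorbing the lower-order pure-$\log$ term), yields the claimed inequality.

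For scale-invariance, consider replacing $f_{t,i}$ by $a_i f_{t,i}$ for an arbitrary nonzero $a_i$ and prove by induction on $t$ that (i) $m_{t,i}$ becomes $|a_i|m_{t,i}$, so the norm sent to the $i$th copy in round $t$ is scaled by $|a_i|$; (ii) the FTRL iterate $x_{t,i}$ inside that copy becomes $x_{t,i}/a_i$; (iii) the loss $s_t$ fed to the one-dimensional base learner is unchanged, hence so is its output $y_t$; (iv) therefore $w_{t,i}=y_t x_{t,i}$ becomes $w_{t,i}/a_i$; and (v) consequently $\langle f_t,w_t\rangle$ is unchanged, so $\nabla_t$ is unchanged and $g_{t,i}=\nabla_t f_{t,i}$ becomes $a_i g_{t,i}$, which closes the induction. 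The crucial point enabling (ii)–(iii) is that a dual-seminorm value $\|g_{t,i}\|_{t-1,\star}$ is invariant when one simultaneously scales the seminorm by $|a_i|$ and the vector by $a_i$; from this, the FTRL objective $\psi_{t-1}(x)+\sum_{j\le t-1}\langle \hat g_j, x\rangle$ together with its $\|x\|_{t-1}\le 1$ constraint transforms, under the substitution $x\mapsto x/a_i$, into itself up to an overall positive multiplicative constant, which gives (ii); and then each product $\langle \hat g_j, x_j\rangle$ defining the losses to the base learner is invariant, giving (iii).

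The main obstacle is the bookkeeping in this induction: the scaling must be traced through three nested layers (Algorithm \ref{alg:diagscale} $\to$ Algorithm \ref{alg:varynorm} $\to$ Algorithm \ref{alg:1dred} $\to$ the base learner of Theorem \ref{thm:1dolo}), and one has to use that in the unconstrained setting $W=\R$ the projection $\Pi_t$ is the identity and $S_t\equiv 0$, so the surrogate loss $\hat g_t = \tfrac12 g_{t,i}$ actually passed into Algorithm \ref{alg:1dred} scales exactly like $g_{t,i}$ and the compensating-scaling argument closes without extra factors.
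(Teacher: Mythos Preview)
Your proposal is correct and follows essentially the same route as the paper: decompose the regret coordinatewise, verify the hypotheses of Theorem \ref{thm:varynorm} on each one-dimensional copy (increasing norms, $\sigma=1$, $\|g_{t,i}\|_{t-1,\star}\le 1$), sum the resulting bounds, and then establish scale-invariance by an induction on $t$ tracing how the FTRL iterate, the surrogate losses $s_t$, and hence the base-learner output transform under rescaling, after first noting that in the unconstrained one-dimensional setting Algorithm \ref{alg:varynorm} collapses to Algorithm \ref{alg:1dred}. Your treatment is in fact slightly more explicit than the paper's in places (the degenerate $m_{t,i}=0$ rounds, the identification $\hat g_t=\tfrac12 g_{t,i}$); one cosmetic remark is that under the substitution $x\mapsto x/a_i$ the FTRL objective is reproduced exactly rather than up to a positive constant, but this does not affect the argmin argument.
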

Let us contrast our result in Theorem \ref{thm:diagscale} with the regret bounds for the same setting in \cite{kempka2019adaptive}. This prior work achieves a similar result, but instead of $M_T^2\sum_{t=1}^T \delta_t^2 \frac{f_t^2}{m_t^2}$, the bound depends only on $M_T^2+\sum_{t=1}^T \delta_t^2 f_t^2$, which may be better if the $f_t$ are arranged in an adversarially increasing manner. However, our bound improves the logarithm term, moving from $O(\log(T))$ to $O(\sqrt{\log(T)})$. We leave open whether it is possible to obtain the best of both worlds in this setting.


\section{Conclusion}
We have introduced an online linear optimization algorithm that achieves the regret bound
\begin{align*}
    R_T(\w)&\le \tilde O\left(\|\w\|_{T-1}\sqrt{\sum_{t=1}^T \|g_t\|_{t-1,\star}^2}\right)
\end{align*}
for any increasing sequence of norms $\|\cdot\|_0,\dots,\|\cdot\|_{T-1}$, so long as $\|\cdot\|_t$ depends only on $g_1,\dots,g_t$. Our approach uses a particular FTRL analysis combined with a one-dimensional parameter-free algorithm to learn the optimal learning rate for the FTRL algorithm. This general result can be used to obtain improved full-matrix algorithms. In particular, we provided an alternative regret analysis of the full-matrix AdaGrad algorithm, which takes into account the reality that in practice the learning rate is tuned manually. This yields a bound that for the first time shows a strong theoretical advantage to full-matrix AdaGrad, helping to explain its empirical success. Our new framework allows us to achieve this regret bound automatically, \emph{without} requiring manual tuning. Finally, we presented an application of our techniques to scale-invariant supervised learning.

Our results raise several interesting open questions. Firstly, our full-matrix regret bound seems to be a factor of $\sqrt{\log(T)}$ worse than the best rate in the unconstrained case, suggesting that there is some room to improve our algorithm or analysis. Second, one might interpret our overall technique as a way to ``learn the learning rate'' in FTRL algorithms for which the regularizers are minimized at 0. This intuition is reminiscent of the MetaGrad algorithm \citep{vanervan2016metagrad}, which intuitively tunes the learning rate of a mirror-descent-like algorithm to obtain regret $\sqrt{d\sum_{t=1}^T \langle g_t, w_t -\w\rangle^2}$, at the cost of an $O(\log(T))$ slowdown in runtime. This suggests the question: can we generalize our techniques to efficiently learn the learning rate for other methods such as Mirror Descent, or FTRL with non-centered regularizers?

\bibliographystyle{unsrtnat}
\bibliography{all}

\appendix
\section{Appendix Organization}

This appendix is organized as follows: in Section \ref{sec:constrainedftrlproof}, \ref{sec:1dredproof} and \ref{sec:varynormproof} we provide the missing proofs of Theorems \ref{thm:constrainedftrl}, \ref{thm:1dred} and \ref{thm:varynorm}. In Section \ref{sec:detailedfullmatrix} we provide detailed version of Theorems \ref{thm:fullmatrix} and \ref{thm:adagrad} containing all constants. In Section \ref{sec:1doloproof} we provide a version of Theorem \ref{thm:1dolo} with all constants for completeness. Finally, in Section \ref{sec:scaleproofs} we provide proofs for our scale-invariant algorithms.

\section{Proof of Theorem \ref{thm:constrainedftrl}}\label{sec:constrainedftrlproof}
In this section we provide the missing proof of Theorem \ref{thm:constrainedftrl}, restated below:
\thmconstrainedftrl*
\begin{proof}
To begin, observe that since $\psi_t(w)=\infty$ for $\|w\|_t>1$, the definition of the FTRL update implies $\|w_{t+1}\|_t\le 1$. So now it remains only to show the regret bound.

By the $\sigma$-strong-convexity of $\frac{1}{2}\|\cdot\|_t^2$, we have that $\psi_t$ is $\sqrt{2\sigma +2\sigma\sum_{i=1}^t \|g_i\|_{i-1, \star}^2}$-strongly convex with respect to $\|\cdot\|_t$. Further, since $\|\cdot\|_t$ is increasing with $t$, $\psi_t$ is increasing as well. Therefore direct application of Theorem \ref{thm:ftrl} yields:
\begin{align*}
    R_T(\w)&\le \psi_{T-1}(\w) + \sum_{t=1}^T \frac{\|g_t\|_{t-1,\star}^2}{2\sqrt{\sigma+\sigma\sum_{i=1}^{t-1} \|g_i\|_{i-1, \star}^2}}
\end{align*}
Now we recall the following consequence of concavity of the square root function (see \cite{auer2002adaptive, duchi10adagrad} for proofs): for any sequence non-negative numbers $x_1,\dots,x_T$ we have
\begin{align*}
    \sum_{t=1}^T \frac{x_t}{\sqrt{\sum_{i=1}^t x_t}}\le 2\sqrt{\sum_{t=1}^T x_t}
\end{align*}
Using this observation, and the fact that $\|g_t\|_{t-1,\star}\le 1$, we have
\begin{align*}
    \sum_{t=1}^T \frac{\|g_t\|_{t-1,\star}^2}{2\sqrt{\sigma+\sigma\sum_{i=1}^{t-1} \|g_i\|_{i-1, \star}^2}}&\le \sum_{t=1}^T \frac{\|g_t\|_{t-1,\star}^2}{2\sqrt{2\sigma}\sqrt{\sum_{i=1}^t \|g_i\|_{i-1, \star}^2}}\\
    &\le \sqrt{\frac{1}{\sigma} \sum_{t=1}^T \|g_t\|_{t-1,\star}^2}
\end{align*}
And now the final bound follows by inserting the definition of $\psi_{T-1}$.
\end{proof}

\section{Proof of Theorem \ref{thm:1dred}}\label{sec:1dredproof}
In this section we provide the missing proof of Theorem \ref{thm:1dred}, restated below:
\thmonedred*
\begin{proof}
First, by Lemma \ref{thm:constrainedftrl}, we have $\|x_t\|_{t-1}\le 1$, so that $\langle g_t, x_t\rangle\le \|g_t\|_{t-1,\star}\|x_t\|_{t-1}\le \|g_t\|_{t-1,\star}\le 1$. Next, we use an argument from \cite{cutkosky2018black}:
\begin{align*}
    \sum_{t=1}^T\langle g_t, w_t - \w\rangle&=\sum_{t=1}^T \langle g_t, y_t x_t - \w\rangle\\
    &=\sum_{t=1}^T \langle g_t, x_t\rangle(y_t -\|\w\|_{T-1}) + \|\w\|_{T-1}\sum_{t=1}^T \langle g_t, x_t - \w/\|\w\|_{T-1}\rangle\\
    &= R^{1D}_T(\|\w\|_{T-1}) + R^{FTRL}_T(\w/\|\w\|_{T-1})
\end{align*}
where $R^{FTRL}_T$ is the regret of FTRL. Since $\left\|\frac{\w}{\|\w\|_{T-1}}\right\|_{T-1}=1$, Lemma \ref{thm:constrainedftrl} tells us:
\begin{align*}
    R^{FTRL}_T(\w/\|\w\|_{T-1})&\le \frac{2}{\sqrt{\sigma}}\sqrt{1+\sum_{t=1}^{T-1} \|g_t\|_{t-1,\star}^2}
\end{align*}
and so we have shown the first regret bound. For the second, observe that $|s_t|\le \|g_t\|_{t-1,\star}\le 1$, so we can apply the regret bound of Theorem \ref{thm:1dolo}. Specifically, if we pull the constants from Theorem \ref{thm:fullonedolo}, we obtain:
\begin{align*}
    R_T(\w)&\le \epsilon+ 2\|\w\|_{T-1}\max\left[\sqrt{\left(3+3\sum_{t=1}^T \|g_t\|_{t-1,\star}^2\right)\log\left(e+\frac{\|\w\|_{T-1}(6 + 11\sum_{t=1}^T \|g_t\|_{t-1,\star}^2)}{\epsilon}\right)}, \right.\\
    &\quad\quad\quad\quad\quad\left. 2 \log\left(e+\frac{\|\w\|_{T-1}(6 + 11\sum_{t=1}^T \|g_t\|_{t,-1\star}^2)}{\epsilon}\right)\right] \\
    &\quad\quad+\frac{2\|\w\|_{T-1}}{\sqrt{\sigma}}\sqrt{1+\sum_{t=1}^{T-1} \|g_t\|_{t-1,\star}^2}
\end{align*}
\end{proof}

\section{Proof of Theorem \ref{thm:varynorm}}\label{sec:varynormproof}
In this section, we provide the missing proof of Theorem \ref{thm:varynorm}, restated below:
\thmvarynorm*
\begin{proof}
The proof is nearly identical to that \cite{cutkosky2018black} Theorem 3 - we simply observe that none of the steps in their proof required a fixed norm, and reproduce the argument for completeness. From \cite{cutkosky2018black} Proposition 1, we have that $S_t$ is convex and Lipschitz with respect to $\|\cdot\|_{t-1}$ for all $t$. Therefore we have $\ell_t$ is also convex and $\|g_t\|_{t-1,\star}$-Lipschitz with respect to $\|\cdot\|_{t-1}$. Therefore we have $\|\hat g_t\|_{t-1,\star}\le \|g_t\|_{t-1,\star}$.
\begin{align*}
    \sum_{t=1}^T \langle g_t, w_t - \w\rangle&=\sum_{t=1}^T \langle g_t, v_t \rangle +\langle g_t, w_t -v_t\rangle - \langle g_t, \w\rangle\\
    &\le \sum_{t=1}^T \langle g_t,v_t\rangle + \|g_t\|_{t-1,\star}\|w_t - v_t\|_{t-1} - \langle g_t, \w\rangle\\
    &=2\sum_{t=1}^T \ell_t(v_t) - \ell_t(\w)\\
    &\le 2\sum_{t=1}^T \langle \hat g_t, v_t - \w\rangle
\end{align*}
Now since $\|\hat g_t\|_{t-1,\star}\le \|g_t\|_{t-1,\star}\le 1$, we have that $\sum_{t=1}^T \langle \hat g_t, v_t - \w\rangle$ is simply the regret of the unconstrained Algorithm \ref{alg:1dred} and so the Theorem follows. Specifically, if we again substitute in the result of Theorem \ref{thm:fullonedolo} to get all constants, we obtain:
\begin{align*}
    R_T(\w)&\le \epsilon+ 2\|\w\|_{T-1}\max\left[\sqrt{\left(3+3\sum_{t=1}^T \|g_t\|_{t-1,\star}^2\right)\log\left(e+\frac{\|\w\|_{T-1}(6 + 11\sum_{t=1}^T \|g_t\|_{t-1,\star}^2)}{\epsilon}\right)}, \right.\\
    &\quad\quad\quad\quad\quad\left. 2 \log\left(e+\frac{\|\w\|_{T-1}(6 + 11\sum_{t=1}^T \|g_t\|_{t,-1\star}^2)}{\epsilon}\right)\right] \\
    &\quad\quad+\frac{2\|\w\|_{T-1}}{\sqrt{\sigma}}\sqrt{1+\sum_{t=1}^{T-1} \|g_t\|_{t-1,\star}^2}
\end{align*}
\end{proof}

\section{Detailed Full-Matrix Bounds with Constants}\label{sec:detailedfullmatrix}
In this section, we show a more detailed proof of Theorems \ref{thm:fullmatrix} and \ref{thm:adagrad} that includes all constant factors and logarithmic terms fetched from Theorem \ref{thm:fullonedolo}.

First, we proof the following result that was used in needed in the proofs of Theorem \ref{thm:fullmatrix}:
\begin{Lemma}\label{thm:addnorm}
Suppose $\|\cdot\|_1$ and $\|\cdot\|_2$ are such that $\frac{1}{2}\|x\|_i^2$ is $\sigma_i$-strongly convex with respect to $\|\cdot\|_i$ for $i\in\{1,2\}$. Then the $\|x\|=\sqrt{\|x\|_1^2+\|x\|_2^2}$ is a seminorm and is $\min(\sigma_1,\sigma_2)$-strongly convex with respect to $\|\cdot\|$.
\end{Lemma}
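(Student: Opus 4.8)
The plan is to prove the two claims separately: first that $\|x\| = \sqrt{\|x\|_1^2 + \|x\|_2^2}$ is a seminorm, then that $\frac{1}{2}\|x\|^2$ is $\min(\sigma_1,\sigma_2)$-strongly convex with respect to $\|\cdot\|$. For the seminorm claim, nonnegativity and absolute homogeneity are immediate from the corresponding properties of $\|\cdot\|_1$ and $\|\cdot\|_2$; the only nontrivial point is the triangle inequality. For this I would view $(\|x+y\|_1, \|x+y\|_2)$, bound it coordinatewise by $(\|x\|_1 + \|y\|_1, \|x\|_2 + \|y\|_2)$ using the triangle inequalities of each seminorm, and then apply the (reverse) triangle inequality for the Euclidean norm on $\R^2$ together with its monotonicity under coordinatewise $\le$ on nonnegative vectors. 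Concretely, $\|x+y\| = \|(\|x+y\|_1,\|x+y\|_2)\|_{\mathbf 2} \le \|(\|x\|_1+\|y\|_1,\|x\|_2+\|y\|_2)\|_{\mathbf 2} \le \|(\|x\|_1,\|x\|_2)\|_{\mathbf 2} + \|(\|y\|_1,\|y\|_2)\|_{\mathbf 2} = \|x\| + \|y\|$.

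For the strong convexity claim, I would work directly from the definition of $\sigma$-strong convexity given in the paper: $\frac12\|\cdot\|^2$ is $\sigma$-strongly convex w.r.t.\ $\|\cdot\|$ iff for all $x,y$ and all subgradients, the function value at $y$ dominates the first-order expansion at $x$ plus $\frac{\sigma}{2}\|x-y\|^2$. Since $\frac12\|x\|^2 = \frac12\|x\|_1^2 + \frac12\|x\|_2^2$ is a sum of two convex functions, any subgradient decomposes as $g = g_1 + g_2$ with $g_i$ a subgradient of $\frac12\|\cdot\|_i^2$ at $x$. Applying the strong-convexity inequality for each piece and summing gives
\begin{align*}
\tfrac12\|y\|^2 \ge \tfrac12\|x\|^2 + \langle g, y-x\rangle + \tfrac{\sigma_1}{2}\|x-y\|_1^2 + \tfrac{\sigma_2}{2}\|x-y\|_2^2,
\end{align*}
and then I bound $\frac{\sigma_1}{2}\|x-y\|_1^2 + \frac{\sigma_2}{2}\|x-y\|_2^2 \ge \frac{\min(\sigma_1,\sigma_2)}{2}\bigl(\|x-y\|_1^2 + \|x-y\|_2^2\bigr) = \frac{\min(\sigma_1,\sigma_2)}{2}\|x-y\|^2$, which is exactly the desired inequality.

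One subtlety worth addressing carefully is the subgradient decomposition: for the sum rule $\partial(f_1+f_2) = \partial f_1 + \partial f_2$ to hold without qualification I want both $\frac12\|\cdot\|_i^2$ to be finite everywhere (they are, being seminorm-squared on the whole space), so Moreau--Rockafellar applies cleanly; alternatively, and perhaps more robustly for a seminorm setting, I can avoid the subgradient calculus entirely by instead using the equivalent ``midpoint'' characterization of strong convexity, $f\bigl(\frac{x+y}{2}\bigr) \le \frac12 f(x) + \frac12 f(y) - \frac{\sigma}{8}\|x-y\|^2$, which for $f = \frac12\|\cdot\|^2$ follows by adding the corresponding midpoint inequalities for the two pieces and again using $\min(\sigma_1,\sigma_2) \le \sigma_i$. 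I expect this midpoint route to be the cleanest and will likely present it; the main (minor) obstacle is just making sure the chosen characterization of strong convexity is genuinely equivalent to the paper's definition in the seminorm setting, but the paper already asserts that ``all of the properties of strong-convexity we need still hold,'' so I can lean on that. No step here is deep — the lemma is essentially bookkeeping — so the write-up should be short.
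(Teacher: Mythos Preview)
Your proposal is correct and essentially matches the paper: the seminorm triangle inequality via the Euclidean norm on $(\|x\|_1,\|x\|_2)\in\R^2$ is exactly what the paper does, and for strong convexity the paper takes your second (convex-combination) route, adding the two strong-convexity inequalities and replacing each $\sigma_i$ by $\min(\sigma_1,\sigma_2)$. The only minor difference is that the paper uses the general $p\in[0,1]$ characterization $f(px+(1-p)y)\le pf(x)+(1-p)f(y)-\tfrac{\sigma p(1-p)}{2}\|x-y\|^2$ rather than just the midpoint case, which directly matches the paper's subgradient definition and sidesteps your equivalence worry.
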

\begin{proof}
First, we show that $\|\cdot\|$ is a seminorm. It is clear that $\|0\|=0$ and $c\|x\|=\|cx\|$. To check triangle inequality, we have
\begin{align*}
    \|x+y\|&=\sqrt{\|x+y\|_1^2 + \|x+y\|_2^2}\\
    &\le \sqrt{(\|x\|_1+\|y\|_1)^2+(\|x\|_2+\|y\|_2)^2}\\
    &=\|(\|x\|_1, \|x\|_2) + (\|y\|_1, \|y\|_2)\|_{\mathbf{2}}\\
    &\le \|(\|x\|_1, \|x\|_2)\|_{\mathbf{2}} + \|(\|y\|_1,\|y\|_2)\|_{\mathbf{2}}\\
    &=\|x\|+\|y\|
\end{align*}

Now we show the strong-convexity. Recall that a function $f$ is $\sigma$-strongly convex if and only if for all $p\in[0,1]$ and all $x,y$,
\begin{align*}
    f\left(px + (1-p)y\right)\le pf(x) + (1-p)f(y) - \frac{\sigma p(1-p)}{2}\|x-y\|^2
\end{align*}
Let $\sigma = \min(\sigma_1, \sigma_2)$. Then we have
\begin{align*}
    \frac{1}{2}\|px + (1-p)y\|_1^2 &\le \frac{p}{2}\|x\|_1^2+\frac{1-p}{2}\|y\|_1^2 + \frac{\sigma p (1-p)}{2} \|x-y\|_1^2\\
     \frac{1}{2}\|px + (1-p)y\|_2^2 &\le \frac{p}{2}\|x\|_2^2+\frac{1-p}{2}\|y\|_2^2 + \frac{\sigma p (1-p)}{2} \|x-y\|_2^2
\end{align*}
Adding these two inequalities proves the stated strong-convexity.
\end{proof}

\begin{restatable}{Theorem}{thmfullmatrixdetail}\label{thm:fullmatrixdetail}
Suppose $g_t$ satisfies $\|g_t\|\le 1$ for all $t$ where $\|\cdot\|$ is a norm such that $\frac{1}{2}\|\cdot\|^2$ is $\sigma$-strongly convex with respect to $\|\cdot\|$. Let $G_t = \sum_{i=1}^t g_ig_i^\top$ and let $r$ be the rank of $G_T$. Suppose we run Algorithm \ref{alg:varynorm} with $\|x\|_t^2=\|x\|^2+x^\top(I + G_t)x$, where $I$ is the identity matrix. Then we obtain regret:
\begin{align*}
    R_T(\w)&\le  \epsilon + 2\|\w\|_T\max\left[\sqrt{\left(3+3r\log(T+1)\right)\log\left(e+\frac{\|\w\|_{T}(7 + 4r\log(T+1))}{\epsilon}\right)}, \right.\\
    &\quad\quad\quad\quad\quad\left. 2 \log\left(e+\frac{\|\w\|_{T}(7 + 4r\log(T+1))}{\epsilon}\right)\right] + \frac{2}{\sqrt{\min(\sigma,1)}}\|\w\|_T\sqrt{1+r\log(T+1)}
\end{align*}
\end{restatable}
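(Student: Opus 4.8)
The plan is to mirror the proof of Theorem~\ref{thm:fullmatrix} essentially verbatim, upgrading every use of the asymptotic one-dimensional regret bound (Theorem~\ref{thm:1dolo}) to its constant-explicit counterpart (Theorem~\ref{thm:fullonedolo}), and then collecting numerical and logarithmic factors at the end.

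First I would unfold the definition of the norms: $\|x\|_t^2 = \|x\|^2 + x^\top(I+G_t)x = \|x\|^2 + \|x\|_{\mathbf{2}}^2 + \sum_{i=1}^t\langle g_i,x\rangle^2$, which is manifestly nondecreasing in $t$, so $\|\cdot\|_0,\dots,\|\cdot\|_T$ form an increasing sequence of norms. Since $\frac12\|\cdot\|^2$ is $\sigma$-strongly convex with respect to $\|\cdot\|$, and $\frac12 x^\top(I+G_t)x$ is $1$-strongly convex with respect to $\sqrt{x^\top(I+G_t)x}$ (the standard fact for PSD-induced norms), Lemma~\ref{thm:addnorm} shows $\frac12\|\cdot\|_t^2$ is $\min(\sigma,1)$-strongly convex with respect to $\|\cdot\|_t$. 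Because $\|x\|_{t-1}\ge\|x\|$, we also get $\|g_t\|_{t-1,\star}\le\|g_t\|_\star\le 1$ for every $t$, so all hypotheses of Theorem~\ref{thm:varynorm} hold with strong-convexity constant $\min(\sigma,1)$.

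Next I would control $\sum_{t=1}^T\|g_t\|_{t-1,\star}^2$ exactly as in the proof of Theorem~\ref{thm:fullmatrix}: the assumption $\|g_t\|_\star\le 1$ forces $\langle g_t,x\rangle^2\le\|x\|^2$, hence $\|x\|_{t-1}^2 \ge \|x\|_{\mathbf{2}}^2 + \sum_{i\le t}\langle g_i,x\rangle^2 = x^\top(I+G_t)x$, so $\|g_t\|_{t-1,\star}^2\le g_t^\top(I+G_t)^{-1}g_t$; summing and applying Lemma~11 of \cite{hazan2007logarithmic} gives $\sum_{t=1}^T\|g_t\|_{t-1,\star}^2 \le \log\frac{\det(I+G_T)}{\det I} \le r\log(T+1)$, where $r=\text{rank}(G_T)$, since the determinant is a product of $r$ nonzero eigenvalues each at most $T+1$.

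It remains to substitute $\sigma\mapsto\min(\sigma,1)$ and the bound $\sum_{t=1}^T\|g_t\|_{t-1,\star}^2\le r\log(T+1)$ into the regret guarantee of Theorem~\ref{thm:varynorm} in the form it takes once Theorem~\ref{thm:fullonedolo} is plugged in for the base learner, and to invoke monotonicity of the norms to replace every occurrence of $\|\w\|_{T-1}$ — including those inside the logarithms, which is valid since $\log$ is increasing — by the larger $\|\w\|_T$. The expected main obstacle is purely the bookkeeping in this final step: one must preserve the $\max(\cdot,\cdot)$ structure, replace each numerical constant by an over-estimate so that the coefficient of $r\log(T+1)$ appearing inside the logarithm collapses to the stated form and the stray ``$+1$'' is absorbed into $\sqrt{1+r\log(T+1)}$, and confirm that no term has been dropped. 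There is no new mathematical idea relative to Theorem~\ref{thm:fullmatrix}; the content of this statement is the explicit constants.
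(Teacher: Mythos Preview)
Your proposal is correct and matches the paper's own proof essentially step for step: verify the hypotheses of Theorem~\ref{thm:varynorm} (monotone norms, $\min(\sigma,1)$-strong convexity via Lemma~\ref{thm:addnorm}, and $\|g_t\|_{t-1,\star}\le 1$), bound $\sum_t\|g_t\|_{t-1,\star}^2\le r\log(T+1)$ through the log-determinant lemma, and then substitute into the explicit-constant version of Theorem~\ref{thm:varynorm} while replacing $\|\w\|_{T-1}$ by $\|\w\|_T$ via monotonicity. The paper's proof is exactly this, only abbreviated by referring back to the argument of Theorem~\ref{thm:fullmatrix} for the first two steps.
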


\begin{proof}
We saw in the proof of Theorem \ref{thm:fullmatrix} that $\|\w\|_{T-1}\le \|w\|_T=\sqrt{2\|w\|_{\mathbf{2}}^2 +\sum_{t=1}^T \langle g_t, \w\rangle^2}$. We also saw:
\begin{align*}
    \sum_{t=1}^T \|g_t\|_{t-1,\star}^2 &\le \text{rank}(G_T)\log(T+1)
\end{align*}

So then with all constants, the regret is
\begin{align*}
    R_T(\w)&\le \epsilon+ 2\|\w\|_{T-1}\max\left[\sqrt{\left(3+3\sum_{t=1}^T \|g_t\|_{t-1,\star}^2\right)\log\left(e+\frac{\|\w\|_{T-1}(7 + 4\sum_{t=1}^T \|g_t\|_{t-1,\star}^2)}{\epsilon}\right)}, \right.\\
    &\quad\quad\quad\quad\quad\left. 2 \log\left(e+\frac{\|\w\|_{T}(7 + 4\sum_{t=1}^T \|g_t\|_{t-1,\star}^2)}{\epsilon}\right)\right] \\
    &\quad\quad+\frac{2}{\sqrt{\min(\sigma,1)}}\sqrt{1+\sum_{t=1}^{T-1} \|g_t\|_{t-1,\star}^2}\\
    &\le \epsilon + 2\|\w\|_T\max\left[\sqrt{\left(3+3r\log(T+1)\right)\log\left(e+\frac{\|\w\|_{T}(7 + 4r\log(T+1))}{\epsilon}\right)}, \right.\\
    &\quad\quad\quad\quad\quad\left. 2 \log\left(e+\frac{\|\w\|_{T}(7 + 4r\log(T+1))}{\epsilon}\right)\right] \\
    &\quad\quad+\frac{2}{\sqrt{\min(\sigma,1)}}\|\w\|_T\sqrt{1+r\log(T+1)}
\end{align*}

\end{proof}

Next, we carry out a similar computation for the AdaGrad-style full-matrix algorithm:
\begin{restatable}{Theorem}{thmadagraddetail}\label{thm:adagraddetail}
Suppose $W\subset \R^d$ and $g_t$ satisfies $\|g_t\|_{\mathbf{2}}\le 1$ for all $t$. Let $G_t= \sum_{i=1}^t g_ig_i^\top$. Define $\|\cdot\|_t$ be $\|x\|_t^2=x^\top (I+G_t)^{1/2} x$. Then the regret of Algorithm \ref{alg:varynorm} using these norms is bounded by:
\begin{align*}
    R_T(\w)&\le \tilde \epsilon + 2\|\w\|_T\max\left[\sqrt{\left(3+6\trace(G_T^{1/2})\right)\log\left(e+\frac{\|\w\|_{T}(7 + 8\trace(G_T^{1/2}))}{\epsilon}\right)}, \right.\\
    &\quad\quad\quad\quad\quad\left. 2 \log\left(e+\frac{\|\w\|_{T}(7 + 8\trace(G_T^{1/2}))}{\epsilon}\right)\right] +2\|\w\|_T\sqrt{1+2\trace(G_T^{1/2})}
\end{align*}
where the $\tilde O$ notation hides a logarithmic dependency on $\trace\left(G_T^{1/2}\right)\sqrt{\|\w\|_{\textbf{2}}^2 + \w^{\top}G_T^{1/2}\w}$.
\end{restatable}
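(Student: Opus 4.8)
The plan is to run through the proof of Theorem~\ref{thm:adagrad} essentially verbatim, but this time carrying along the explicit constants from the concrete regret bound recorded at the end of the proof of Theorem~\ref{thm:varynorm} (which itself imports constants from Theorem~\ref{thm:fullonedolo}). Concretely, the quantities that need to be pinned down are: the strong-convexity parameter $\sigma$, the bound on $\sum_{t=1}^T\|g_t\|_{t-1,\star}^2$, and a clean upper bound on $\|\w\|_{T-1}$.

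First I would check the hypotheses of Theorem~\ref{thm:varynorm} with $\sigma=1$. Since $\tfrac12 x^\top M x$ is $1$-strongly convex with respect to $\sqrt{x^\top Mx}$ for every PSD matrix $M$, taking $M=(I+G_t)^{1/2}$ shows $\tfrac12\|\cdot\|_t^2$ is $1$-strongly convex with respect to $\|\cdot\|_t$. The norms are increasing in $t$ because $G_t\preceq G_{t+1}$, and $\|g_t\|_{t-1,\star}=\|g_t\|_{(I+G_{t-1})^{-1/2}}\le\|g_t\|_{\mathbf{2}}\le 1$ because $(I+G_{t-1})^{-1/2}\preceq I$. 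Second, I would bound $\sum_{t=1}^T\|g_t\|_{t-1,\star}^2=\sum_{t=1}^T g_t^\top(I+G_{t-1})^{-1/2}g_t$: since $\|g_t\|_{\mathbf{2}}\le 1$ gives $g_tg_t^\top\preceq I$ and hence $I+G_{t-1}\succeq G_t$, operator monotonicity of the square root yields $(I+G_{t-1})^{-1/2}\preceq G_t^{-1/2}$ on the range of $G_t$ (which contains $g_t$), so the sum is at most $\sum_{t=1}^T g_t^\top G_t^{-1/2}g_t\le 2\trace(G_T^{1/2})$ by Lemmas~9 and~10 of \cite{duchi10adagrad}. I would also record $\|\w\|_{T-1}\le\|\w\|_T$ with $\|\w\|_T^2=\w^\top(I+G_T)^{1/2}\w\le\|\w\|_{\mathbf{2}}^2+\w^\top G_T^{1/2}\w$, using $(I+G_T)^{1/2}\preceq I+G_T^{1/2}$; this is the quantity hidden inside the $\tilde O$.

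Finally, I would substitute $\sigma=1$, $\sum_{t=1}^T\|g_t\|_{t-1,\star}^2\le 2\trace(G_T^{1/2})$, and $\|\w\|_{T-1}\le\|\w\|_T$ into the constant-tracking bound from the end of the proof of Theorem~\ref{thm:varynorm}. The coefficient $3+3\sum_t\|g_t\|_{t-1,\star}^2$ becomes $3+6\trace(G_T^{1/2})$, the coefficient $7+4\sum_t\|g_t\|_{t-1,\star}^2$ becomes $7+8\trace(G_T^{1/2})$, and the last term becomes $2\|\w\|_T\sqrt{1+2\trace(G_T^{1/2})}$, which is exactly the claimed inequality. The only step with genuine content is the second one — in particular the pseudo-inverse comparison $(I+G_{t-1})^{-1/2}\preceq G_t^{-1/2}$ restricted to the range of $G_t$, and the telescoping-type estimate $\sum_t g_t^\top G_t^{-1/2}g_t\le 2\trace(G_T^{1/2})$ — which I would handle by citing \cite{duchi10adagrad} rather than reproducing; everything else is bookkeeping.
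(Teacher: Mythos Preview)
Your proposal is correct and follows essentially the same route as the paper's own proof: verify the hypotheses of Theorem~\ref{thm:varynorm} with $\sigma=1$, invoke Lemmas~9 and~10 of \cite{duchi10adagrad} to get $\sum_{t=1}^T\|g_t\|_{t-1,\star}^2\le 2\trace(G_T^{1/2})$, upper-bound $\|\w\|_{T-1}$ by $\|\w\|_T$ via $(I+G_T)^{1/2}\preceq I+G_T^{1/2}$, and substitute into the explicit-constant bound from the end of the proof of Theorem~\ref{thm:varynorm}. If anything, you are slightly more careful than the paper in spelling out the operator-monotonicity step behind the pseudo-inverse comparison.
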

This Theorem recovers the desired bound (\ref{eqn:adagradbound}) up to log factors. Moreover, it is possible to interpret the operation of the algorithm as in some rough sense ``learning the optimal learning rate'' required for the original AdaGrad algorithm to achieve this bound.
\begin{proof}
In the proof of Theorem \ref{thm:adagrad}, we saw $\|\w\|_{T-1}\le \|\w\|_T = \sqrt{\|\w\|_{\mathbf{2}}^2 + \w^\top G_T^{1/2} \w}$. Further,
\begin{align*}
    \sum_{t=1}^T \|g_t\|_{t-1,\star}^2&\le 2\trace(G_T^{1/2})
\end{align*}

So then with all constants, the regret is
\begin{align*}
    R_T(\w)&\le \epsilon+ 2\|\w\|_{T-1}\max\left[\sqrt{\left(3+3\sum_{t=1}^T \|g_t\|_{t-1,\star}^2\right)\log\left(e+\frac{\|\w\|_{T-1}(7 + 4\sum_{t=1}^T \|g_t\|_{t-1,\star}^2)}{\epsilon}\right)}, \right.\\
    &\quad\quad\quad\quad\quad\left. 2 \log\left(e+\frac{\|\w\|_{T}(7 + 4\sum_{t=1}^T \|g_t\|_{t-1,\star}^2)}{\epsilon}\right)\right] \\
    &\quad\quad+\frac{2}{\sqrt{\sigma}}\sqrt{1+\sum_{t=1}^{T-1} \|g_t\|_{t-1,\star}^2}\\
    &\le \epsilon + 2\|\w\|_T\max\left[\sqrt{\left(3+6\trace(G_T^{1/2})\right)\log\left(e+\frac{\|\w\|_{T}(7 + 8\trace(G_T^{1/2}))}{\epsilon}\right)}, \right.\\
    &\quad\quad\quad\quad\quad\left. 2 \log\left(e+\frac{\|\w\|_{T}(7 + 8\trace(G_T^{1/2}))}{\epsilon}\right)\right] +2\|\w\|_T\sqrt{1+2\trace(G_T^{1/2})}
\end{align*}
\end{proof}

\section{Full Version of Theorem \ref{thm:1dolo} with Constants}\label{sec:1doloproof}
In this section, we provide a more detailed version of Theorem \ref{thm:1dolo} including all logarithmic and constant factors. The proof is essentially a (slightly looser) version of analysis in \cite{cutkosky2019matrix}, but we provide it below for completeness.

\begin{restatable}{Theorem}{thmfullonedolo}\label{thm:fullonedolo}
There exists a one-dimensional online linear optimization algorithm such that if $|g_t|\le 1$ for all $t$, the regret is bounded by
\begin{align*}
    \sum_{t=1}^T g_t(w_t-\w)&\le  \epsilon+ 2|\w|\max\left[\sqrt{\left(3+3\sum_{t=1}^T g_t^2\right)\log\left(e+\frac{|\w|(7 + 4\sum_{t=1}^T g_t^2)}{\epsilon}\right)}, \right.\\
    &\quad\quad\quad\quad\left. 2 \log\left(e+\frac{|\w|(7 + 4\sum_{t=1}^T g_t^2)}{\epsilon}\right)\right] 
\end{align*}
And moreover each $w_t$ is computed in $O(1)$ time.
\end{restatable}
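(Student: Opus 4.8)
The plan is to use the \emph{coin-betting} reduction of \cite{orabona2016coin,cutkosky2018black,cutkosky2019matrix}. We maintain a wealth $\wealth_t = \epsilon - \sum_{i=1}^t g_i w_i$ and, on each round, bet a fraction $\beta_t\in[-1/2,1/2]$ of the current wealth, i.e. $w_t = \beta_t\wealth_{t-1}$. Since $|g_t|\le 1$ and $|\beta_t|\le 1/2$ we have $\wealth_t = \wealth_{t-1}(1-g_t\beta_t) > 0$ for all $t$ (so $\wealth_0 = \epsilon$ stays positive), and $\log\wealth_T = \log\epsilon + \sum_{t=1}^T\log(1-g_t\beta_t)$. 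The fundamental identity $\sum_{t=1}^T g_tw_t = \epsilon - \wealth_T$ gives, for every $\w\in\R$,
\begin{align*}
\sum_{t=1}^T g_t(w_t-\w) = \epsilon - \wealth_T - \w\sum_{t=1}^T g_t.
\end{align*}
Hence, if we can prove a lower bound $\wealth_T \ge \psi\!\left(\sum_{t=1}^T g_t\right)$ for some convex function $\psi$, the regret is at most $\epsilon + \sup_{G\in\R}\bigl(-\w G - \psi(G)\bigr) = \epsilon + \psi^\star(-\w)$, and the whole task reduces to (i) producing a good wealth lower bound and (ii) bounding its Fenchel conjugate.

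For (i), I would pick the betting fractions $\beta_t$ by running a one-dimensional Online Newton Step \citep{hazan2007logarithmic} on the losses $\ell_t(\beta) = -\log(1-g_t\beta)$ restricted to $[-1/2,1/2]$. On this interval $1-g_t\beta\ge 1/2$, so $|\ell_t'(\beta)|\le 2$; moreover $\ell_t'' = (\ell_t')^2$, so each $\ell_t$ is $1$-exp-concave there. ONS therefore guarantees, with an $O(1)$-time scalar update per round,
\begin{align*}
\sum_{t=1}^T \ell_t(\beta_t) \le \inf_{|\beta|\le 1/2}\sum_{t=1}^T\ell_t(\beta) + O\!\left(\log\Bigl(1+\sum_{t=1}^T g_t^2\Bigr)\right),
\end{align*}
with explicit constants (using $\sum_t(\ell_t'(\beta_t))^2\le 4\sum_t g_t^2$). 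Exponentiating, $\wealth_T \ge \epsilon\,(1+\sum g_t^2)^{-c}\exp\bigl(\sup_{|\beta|\le 1/2}\sum_t\log(1-g_t\beta)\bigr)$. To bound the comparator term write $G=\sum_t g_t$, $V=\sum_t g_t^2$ and use $\log(1-x)\ge -x-x^2$ for $x\le 1/2$, so $\sum_t\log(1-g_t\beta)\ge -\beta G-\beta^2V$; optimizing this over $\beta\in[-1/2,1/2]$ gives $\tfrac{G^2}{4(1+V)}$ when $|G|\le 1+V$ and a term of order $|G|$ when $|G|>1+V$. This yields $\wealth_T\ge\psi(G)$ for a $\psi$ that, up to the polynomial prefactor, looks like $\epsilon\exp\!\bigl(\tfrac{G^2}{4(1+V)}\bigr)$ in the ``moderate $G$'' regime and like $\epsilon\exp(c'|G|)$ in the ``large $G$'' regime.

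For (ii), $\psi^\star(-\w) = \sup_{G}\bigl(|\w|\,|G| - \psi(G)\bigr)$ (taking $G$ of sign opposite to $\w$), and we split this supremum over the two regimes of $\psi$. Over the Gaussian-exponential regime the maximizer sits at $|G|\approx\sqrt{(1+V)\log(\cdot)}$ and contributes a term of order $|\w|\sqrt{(1+V)\log\bigl(e+|\w|(1+V)/\epsilon\bigr)}$; over the one-sided-exponential regime the maximizer is larger and contributes a term of order $|\w|\log\bigl(e+|\w|(1+V)/\epsilon\bigr)$. Since the supremum over the union is the larger of the two, we obtain $R_T(\w)\le\epsilon+\psi^\star(-\w)$ of exactly the stated form, and tracking the ONS constants together with the $\log(1-x)$ estimate and absorbing the polynomial prefactor $(1+V)^c$ into the logarithm's argument produces the explicit coefficients $3$, $4$ and $7$. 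The main obstacle is precisely this conjugate computation: one must carefully upper bound the Fenchel conjugate of a function that is a Gaussian-type exponential in one regime and a one-sided exponential in another, handle the case split cleanly, and keep the polynomial ONS prefactor from inflating the logarithmic argument --- all while carrying explicit constants; the betting reduction and the ONS guarantee are by comparison routine. A careful treatment appears in \cite{cutkosky2019matrix}; we reproduce a slightly looser version for completeness.
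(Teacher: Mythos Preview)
Your proposal is correct and follows essentially the same coin-betting route as the paper: bet a fraction $\beta_t\in[-1/2,1/2]$ of wealth, run a logarithmic-regret learner on the losses $-\log(1-g_t\beta)$, lower-bound wealth via $\log(1-x)\ge -x-x^2$, and pass to regret by a conjugate computation. The only noteworthy difference is the order of optimization. You first maximize over $\beta$ in the wealth bound (producing a two-regime function of $G=\sum_t g_t$, Gaussian-type for $|G|\le 1+V$ and linear-exponential otherwise) and then take the Fenchel conjugate with a matching case split. The paper instead leaves the comparator fraction $\vv$ free, first computes $\sup_G\bigl[G\w-\epsilon\exp(G\vv-\text{const})\bigr]$ via a one-line lemma (this supremum is just $\tfrac{|\w|}{\vv}(\log\tfrac{|\w|}{\epsilon\vv}+\text{const}-1)$), and only then optimizes over $\vv$ to get the $\max$ of the $\sqrt{V\log}$ and $\log$ terms. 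Their ordering avoids carrying two regimes through the conjugate and makes the explicit constants easier to track; your ordering is the more classical ``potential-then-conjugate'' route and works equally well but is a bit messier. (A cosmetic point: the paper phrases the inner learner as FTRL with regularizers $\tfrac12(5+\sum_\tau z_\tau^2)\beta^2$ rather than ONS; in one dimension on $1$-exp-concave losses these are equivalent and give the same $\tfrac12\log(1+\sum_t z_t^2)$ term.)
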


\begin{proof}

Define the \emph{wealth} of an algorithm as:
\begin{align*}
\wealth_t = \epsilon - \sum_{\tau=1}^tg_\tau w_\tau
\end{align*}
We set
\begin{align*}
    w_{t+1} = v_{t+1} \wealth_t
\end{align*}
where $v_t\in[-1/2,1/2]$. This implies:
\begin{align*}
    \wealth_T = \epsilon \prod_{t=1}^{T}(1-g_t v_t) 
\end{align*}
Define 
\begin{align*}
    \wealth_T(\vv) = \epsilon\prod_{t=1}^T (1-g_t \vv)
\end{align*}
Now, to choose $v_t$, consider the functions:
\begin{align*}
    \ell_t(v) = -\log(1-g_t v)
\end{align*}
Observe that $\ell_t(v)$ is convex. Let $z_t=\frac{g_t}{1-g_tv_t}=\ell_t'(v_t)$. Notice that $|z_t|\le 2|g_t|\le 2$ since $v_t\in[-1/2,1/2]$. Then we have
\begin{align*}
    \log\left(\wealth_T(\vv)\right) - \log\left(\wealth_T\right) = \sum_{t=1}^T \ell_t(v_t) -\ell_t(\vv)\le \sum_{t=1}^T z_t(v_t - \vv)
\end{align*}
Now we choose $v_t\in[-1/2,1/2]$ using FTRL on the losses $z_t$ with regularizers
\begin{align*}
    \psi_t(v) = \frac{Z}{2}(5+\sum_{\tau=1}^tz_\tau^2) v^2
\end{align*}
Notice that $\psi_t$ is $Z(4+\sum_{\tau=1}^t z_\tau^2)$-strongly convex with respect to $|\cdot|$. Therefore by Theorem \ref{thm:ftrl}:
\begin{align*}
     \sum_{t=1}^T z_t(v_t - \vv)&\le \psi_T(\vv) + \frac{1}{2}\sum_{t=1}^T \frac{z_t^2}{Z(5+\sum_{\tau=1}^{t-1} z_\tau^2)}\\
     &\le \frac{Z}{2}\left(5+\sum_{t=1}^T z_t^2\right)\vv^2 + \frac{1}{2Z}\sum_{t=1}^T \frac{z_t^2}{1+\sum_{\tau=1}^{t} z_\tau^2}\\
     &\le \frac{Z}{2}\left(5+\sum_{t=1}^T z_t^2\right)\vv^2 + \frac{1}{2Z}\log\left(1+ \sum_{t=1}^T z_t^2\right)
\end{align*}

Therefore, for all $\vv\in[-1,2/,1/2]$,
\begin{align*}
    \log\left(\wealth_T\right)&\ge \log\left(\wealth_T(\vv)\right) - \frac{Z}{2}\left(5+\sum_{t=1}^T z_t^2\right)\vv^2 + \frac{1}{2Z}\log\left(1+ \sum_{t=1}^T z_t^2\right)\\
    &\ge \log\left(\wealth_T(\vv)\right) - \frac{Z}{2}\left(5+4\sum_{t=1}^T g_t^2\right)\vv^2 + \frac{1}{2Z}\log\left(1+ 4\sum_{t=1}^T g_t^2\right)
\end{align*}
Next, use the tangent bound $\log(1-x)\ge -x-x^2$ to obtain:
\begin{align*}
    \log\left(\wealth_T(\vv)\right)&\ge \log(\epsilon)-\sum_{t=1}^T g_t \vv - \sum_{t=1}^T g_t^2\vv^2
\end{align*}
So overall we have:
\begin{align*}
    \log\left(\wealth_T\right)&\ge \log(\epsilon)-\sum_{t=1}^T g_t \vv - \frac{Z}{2}\left(5+\sum_{t=1}^T \left(\frac{2}{Z} + 4\right)g_t^2\right)\vv^2-\frac{1}{2Z}\log\left(1+ 4\sum_{t=1}^T g_t^2\right)\\
    \wealth_T&\ge \epsilon\exp\left(-\sum_{t=1}^T g_t \vv - \frac{Z}{2}\left(5+\sum_{t=1}^T (\frac{2}{Z} + 4)g_t^2\right)\vv^2-\frac{1}{2Z}\log\left(1+ 4\sum_{t=1}^T g_t^2\right)\right)
\end{align*}
Now we relate this to regret:
\begin{align*}
    \sum_{t=1}^T g_t(w_t - \w)& = \epsilon - \w\sum_{t=1}^T g_t - \wealth_T\\
    &\le \epsilon - \w\sum_{t=1}^T g_t - \epsilon\exp\left(-\sum_{t=1}^T g_t \vv - \frac{Z}{2}\left(5+\sum_{t=1}^T \left(\frac{2}{Z} + 4\right)g_t^2\right)\vv^2-\frac{1}{2Z}\log\left(1+ 4\sum_{t=1}^T g_t^2\right)\right)\\
    &\le \epsilon+ \sup_{G} \left[G\w - \epsilon\exp\left(G \vv - \frac{Z}{2}\left(5+\sum_{t=1}^T \left(\frac{2}{Z} + 4\right)g_t^2\right)\vv^2-\frac{1}{2Z}\log\left(1+ 4\sum_{t=1}^T g_t^2\right)\right)\right]\\
    &\le \epsilon +\frac{|\w|}{\vv}\left(\log\left(\frac{|\w|}{\epsilon \vv}\right)+\frac{Z}{2}\left(5+\sum_{t=1}^T \left(\frac{2}{Z} + 4\right)g_t^2\right)\vv^2+\frac{1}{2Z}\log\left(1+ 4\sum_{t=1}^T g_t^2\right)-1\right)\\
    &\le \epsilon +\frac{|\w|}{\vv}\log\left(\frac{|\w|(1 + 4\sum_{t=1}^T g_t^2)^{1/2Z}}{\epsilon \vv}\right) + \frac{Z}{2}\left(5 + \sum_{t=1}^T \left(\frac{2}{Z} + 4\right)g_t^2\right)\vv
\end{align*}
where we have used \cite{cutkosky2019matrix} Lemma 3 in to calculate the supremum over $G$.
Now set $Z=1$, apply \cite{cutkosky2019matrix} Lemma 4, and over-approximate several constants to obtain:
\begin{align*}
    \sum_{t=1}^T g_t(w_t- \w)&\le  \epsilon+ 2|\w|\max\left[\sqrt{\left(3+3\sum_{t=1}^T g_t^2\right)\log\left(e+\frac{|\w|(7 + 4\sum_{t=1}^T g_t^2)}{\epsilon}\right)}, \right.\\
    &\quad\quad\quad\quad\left. 2 \log\left(e+\frac{|\w|(7 + 4\sum_{t=1}^T g_t^2)}{\epsilon}\right)\right] 
\end{align*}

\end{proof}


\section{Missing Proofs for Scale-Invariance}\label{sec:scaleproofs}

Now we provide the proof of Theorem \ref{thm:diagscale}, restated below:
\thmdiagscale*

\begin{proof}
Observe that Algorithm \ref{alg:diagscale} is running an independent learner on each coordinate.
Since we can decompose the regret as
\begin{align*}
    \sum_{t=1}^T \langle g_t, w_t-\w\rangle =\sum_{i=1}^d\sum_{t=1}^T g_{t,i}(w_{t,i}-\w_i)
\end{align*}
it suffices to bound the regret for one dimension and then sum over dimensions to get the final regret bound. To this end, we will consider only one dimension and drop all the subscript $i$s.

We have $\|g_t\|_{t-1,\star}=|g_t|/m_t$ also, so $\|g_t\|_{t-1,\star}\le |\nabla_t|\frac{f_t}{m_t}\le 1$ for all $t$. Therefore by Theorem \ref{thm:varynorm}, the regret for one coordinate is
\begin{align*}
    \tilde O\left[\epsilon+|\w|\sqrt{M_T^2\sum_{t=1}^T \nabla_t^2 \frac{f_t^2}{m_t^2}\log\left(1+\frac{\sum_{t=1}^T \nabla_{t}^2\frac{f_{t}^2}{m_{t}^2}|\w_i|}{\epsilon}\right)}\right]
\end{align*}
so that summing over all coordinates proves the given regret bound.

To see that the algorithm is scale-invariant, we need to appeal to the internals of Algorithm \ref{alg:varynorm}. To start, observe that again it suffices to prove scale-invariance in the one-dimensional setting as the independent updates for each coordinate will then imply scale-invariance with respect to diagonal transformations. Next, notice that in the unconstrained setting, Algorithm \ref{alg:varynorm} is the same as Algorithm \ref{alg:1dred}, and so we need not concern ourselves with the effects of projection operators. Now consider two sequences of features $f_1,\dots,f_T$ and $Mf_1,\dots,Mf_T$ for some scalar $M\ne 0$. For any relevant variable $z$ we will use $z_t$ to indicate the $t$th value of that variable when running an algorithm using $f_1,\dots,f_T$, and the $z_{t,M}$ to indicate the $t$th value of that variable when running an algorithm using $Mf_1,\dots,Mf_T$, so that for example $f_{t,M}=Mf_t$. Note that since $w_1=w_{1,M}=0$, we have $f_1w_1 =f_{1,M} w_{1,M}$. Further, we have $x_{1,M}=0= x_1/M$ and $y_1=y_{1,M}$, where $x_i$ and $y_i$ indicate the outputs of FTRL and the one-dimensional parameter-free subroutines in Algorithm \ref{alg:1dred}. Suppose for purposes of induction that $y_{t'}=y_{t',M}$ and $f_{t'} x_{t'}= f_{t',M} x_{t',M}$ for all $t'\le t$. Note that this implies that $f_{t'} w_{t'} =  f_{t',M}w_{t',M}$ for all $t'\le t$. Then we must have $\nabla_{t'}=\nabla_{t',M}$ for all $t'\le t$ and so $g_{t',M}=Mg_{t'}$ for all $t'\le t$. Finally, we also have $m_{t',M} = Mm_{t'}$ for all $t'\le t$. From this we can conclude that $\|x\|_{t,M}=|x|m_{t+1, M}=\|Mx\|_t$ for arbitrary invertible $M$. Further, we have 
\begin{align*}
    \|g_{t,M}\|_{t,M,\star} &= |Mg_t|/m_{t+1,M}\\
    &=|g_t|/m_{t+1}\\
    &=\|g_t\|_{t,\star}
\end{align*}
So that the regularizers used in the FTRL subroutine of Algorithm \ref{alg:varynorm} satisfy $\psi_t(Mx)=\psi_{t,M}(x)$. Since $\sum_{i=1}^t g_{i,M}=M\sum_{i=1}^t g_i$, this implies that the output of the FTRL subroutine, $x_{t+1}$, satisfies $x_{t+1} = M x_{t+1,M}$ so that $f_{t+1} x_{t+1} = f_{t+1, M}, x_{t+1,M}$. Further, note that $s_{t'} =  g_{t'}x_{t'}$, so that by the induction hypothesis, $s_{t',M}=s_{t'}$ for $t'\le t$. Since $y_{t+1}$ depends only on $s_{t'}$ for $t'\le t$, we have $y_{t+1}=y_{t+1,M}$ and so by induction the algorithm is scale-invariant for all time steps.

\end{proof}

\subsection{Full-Matrix Scale-Invariance}\label{sec:scalefull}
In this section we provide an algorithm that achieves scale-invariance with respect to any invertible matrix. That is, we now allow each $f_t$ to be replaced by $Mf_t$ for an arbitrary invertible matrix $M$, while still asking that the predictions $\langle f_t, w_t\rangle$ remain unchanged. Our analysis technique essentially combines the method of Theorem \ref{thm:diagscale} with that of Theorem \ref{thm:fullmatrix}. Note that in this case our regret bound will be \emph{linear} in the rank $r$ of $\sum_{t=1}^T g_tg_t^\top$, which is worse than the bound (\ref{eqn:fullmatrix}), but matches best-known scale-invariant algorithms \cite{kotlowski2019scale, luo2016efficient}. Moreover, we can use this result to easily match exactly the diagonal scale-invariant bounds of the second algorithm in \cite{kempka2019adaptive}: simply run a one-dimensional copy of Algorithm \ref{alg:scale} in each coordinate.
\begin{algorithm}
   \caption{Full-Matrix Scale-Invariance}
   \label{alg:scale}
\begin{algorithmic}
   \STATE{\bfseries Input: } Vector Space $W$.
   \STATE Initialize Algorithm \ref{alg:varynorm}.
   \STATE Receive $f_1$
   \STATE Set $\|x\|_0^2= \langle f_1,x\rangle^2$.
   \FOR{$t=1\dots T$}
   \STATE Get $t$th output $w_t\in W$ from Algorithm \ref{alg:varynorm}.
   \STATE Output $w_t$.
   \STATE Get loss $\ell_t(\cdot)=c_t(\langle f_t, \cdot\rangle)$.
   \STATE Set $\nabla_t \in \partial c_t(\langle f_t, w_t\rangle)$.
   \STATE Set $g_t=\nabla _t f_t\in\partial \ell_t(w_t)$.
   \STATE Get feature vector $f_{t+1}$.
   \STATE Set $G_t = \sum_{i=1}^t g_t g_t^\top$.
   \STATE Define $\|x\|_t = \sqrt{\|x\|_{G_t}^2+2\max_{i\le t+1} \langle f_i, x\rangle^2}$
   \STATE Send $g_t$ and $\|\cdot\|_t$ to Algorithm \ref{alg:varynorm} as $t$th loss and norm respectively,
   \ENDFOR
\end{algorithmic}
\end{algorithm}

\begin{restatable}{Theorem}{thmscaleinvariant}\label{thm:scaleinvariant}
Suppose $|\nabla_t|\le 1$ is 1-Lipschitz for all $t$. Then
Algorithm \ref{alg:scale} is scale-invariant with respect to any invertible linear transformation and achieves regret:
\begin{align*}
    R_T(\w)&\le O\left(r\sqrt{\left(\max_{t} \langle f_t, \w\rangle^2 +\sum_{t=1}^T \langle g_t, \w\rangle^2\right)}\log\left(\frac{2\sum_{t=1}^T \|g_t\|_{\mathbf{2}}^2}{r(r+1)\lambda_\star}\right)\right)
\end{align*}
where again $r$ is the rank of $G_T$ and $\lambda_\star$ is the minimum non-zero eigenvalue of any $G_t$.
\end{restatable}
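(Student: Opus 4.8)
The plan is to derive the regret bound by applying Theorem~\ref{thm:varynorm} to the norm sequence $\|x\|_t=\sqrt{\|x\|_{G_t}^2+2\max_{i\le t+1}\langle f_i,x\rangle^2}$ that Algorithm~\ref{alg:scale} feeds to Algorithm~\ref{alg:varynorm}, essentially merging the full-matrix/log-determinant reasoning of Theorem~\ref{thm:fullmatrix} with the scale-invariance induction of Theorem~\ref{thm:diagscale}. First I would verify the hypotheses of Theorem~\ref{thm:varynorm}. The norms are increasing since $G_t\succeq G_{t-1}$ and the maximum runs over a growing index set. Because $f_t$ appears among the vectors in the maximum defining $\|\cdot\|_{t-1}$, we have $\|x\|_{t-1}^2\ge 2\langle f_t,x\rangle^2$, so $\|x\|_{t-1}\le 1$ forces $|\langle f_t,x\rangle|\le 1/\sqrt2$; combined with $|\nabla_t|\le1$ this gives $\|g_t\|_{t-1,\star}=|\nabla_t|\sup_{\|x\|_{t-1}\le1}|\langle f_t,x\rangle|\le 1$. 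The remaining and most delicate hypothesis is that $\tfrac12\|\cdot\|_t^2$ is $\sigma$-strongly convex with respect to $\|\cdot\|_t$ for a usable $\sigma$; I expect this to require carefully controlling the non-quadratic term $2\max_{i\le t+1}\langle f_i,x\rangle^2$ by $x^\top G_t x$ (using that, up to the $\nabla_i^2$ scalings, the $f_i$ in the maximum are exactly those building $G_t$) and exploiting the rank structure, and I anticipate that this forces a strong-convexity parameter that degrades with the rank (scaling like $1/r$), which is where the eventual linear dependence on $r$ originates. This is the main obstacle.

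Granting the hypotheses, the bound follows by evaluating the two quantities in Theorem~\ref{thm:varynorm}. For the comparator term, $\|\w\|_{T-1}^2=\w^\top G_{T-1}\w+2\max_{i\le T}\langle f_i,\w\rangle^2=\sum_{t=1}^{T-1}\langle g_t,\w\rangle^2+2\max_{i\le T}\langle f_i,\w\rangle^2\le 2\big(\sum_{t=1}^T\langle g_t,\w\rangle^2+\max_t\langle f_t,\w\rangle^2\big)$. For the loss term, $\|x\|_{t-1}^2\ge x^\top G_{t-1}x+2\langle f_t,x\rangle^2\ge x^\top(G_{t-1}+g_tg_t^\top)x=x^\top G_t x$ (using $|\nabla_t|\le 1$), so $\|g_t\|_{t-1,\star}^2\le g_t^\top G_t^{-1}g_t$, and then a log-determinant / elliptical-potential argument in the style of Lemma~11 of \cite{hazan2007logarithmic}---separating out the at most $r$ rounds in which $g_t$ leaves the range of $G_{t-1}$, telescoping $\det^{+}$ on the remaining rounds, and bounding $\det^{+}G_T\le(\trace G_T/r)^r$ by AM--GM and the determinant of the first full-rank $G_t$ below by $\lambda_\star^r$---gives $\sum_{t=1}^T\|g_t\|_{t-1,\star}^2\le O\!\big(r\log(\trace(G_T)/(r\lambda_\star))\big)$. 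Substituting both estimates into Theorem~\ref{thm:varynorm} and using the rank-degraded $\sigma$ produces the stated bound, with the logarithm in the statement being a rearrangement of $\trace(G_T)/(r\lambda_\star)$ together with the logarithmic factor from Theorem~\ref{thm:1dolo}.

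For scale-invariance I would reproduce the induction from the proof of Theorem~\ref{thm:diagscale}, now with matrices in place of scalars. Replace every $f_i$ by $Mf_i$ for an invertible $M$: since $w_1$ is the feature-independent initial output of the parameter-free base learner, $\langle f_1,w_1\rangle$ is unchanged. Inductively, if $\langle f_i,w_i\rangle$ is unchanged for all $i\le t$, then each $\nabla_i$ is unchanged, so $g_i\mapsto Mg_i$, $G_i\mapsto MG_iM^\top$, and the norm transforms as $\|x\|_i\mapsto\|M^\top x\|_i$; hence the FTRL subroutine's regularizer transforms as $\psi_i(\cdot)\mapsto\psi_i(M^\top\cdot)$ and its linear losses satisfy $\langle Mg_i,\cdot\rangle=\langle g_i,M^\top\cdot\rangle$, so its iterate obeys $x_{t+1}\mapsto M^{-\top}x_{t+1}$, while the losses $s_i=\langle g_i,x_i\rangle\mapsto\langle Mg_i,M^{-\top}x_i\rangle=\langle g_i,x_i\rangle$ sent to the one-dimensional learner are unchanged, so its output $y_{t+1}$ is unchanged; therefore $\langle f_{t+1},w_{t+1}\rangle=\langle Mf_{t+1},y_{t+1}M^{-\top}x_{t+1}\rangle=y_{t+1}\langle f_{t+1},x_{t+1}\rangle$ is unchanged, closing the induction. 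I expect the genuine difficulty to be concentrated entirely in the strong-convexity verification in the first step; the potential estimate and the scale-invariance induction are routine adaptations of arguments already appearing in the paper.
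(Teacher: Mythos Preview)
Your overall plan—apply Theorem~\ref{thm:varynorm}, bound $\|\w\|_{T-1}$ and $\sum_t\|g_t\|_{t-1,\star}^2$ separately, then run the matrix version of the scale-invariance induction—is exactly what the paper does, and your scale-invariance argument is essentially identical to the paper's.

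The substantive difference is your account of where the linear $r$ comes from. In the paper it does \emph{not} come from a degraded strong-convexity constant: the proof never revisits $\sigma$ and simply cites Theorem~4 of \cite{luo2016efficient}, which bounds the potential by
\[
\sum_{t=1}^T g_t^\top G_t^{-1}g_t \;\le\; r+\frac{r(r+1)}{2}\log\!\left(1+\frac{2\sum_t\|g_t\|_{\mathbf 2}^2}{r(r+1)\lambda_\star}\right),
\]
so that $\sqrt{\sum_t\|g_t\|_{t-1,\star}^2}=O\big(r\sqrt{\log(\cdot)}\big)$; the extra $\sqrt{\log}$ inside Theorem~\ref{thm:varynorm} then yields the $r\log(\cdot)$ in the statement. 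Your route instead claims the tighter $O(r\log)$ potential bound and manufactures the missing $\sqrt r$ via a conjectured $\sigma\sim 1/r$. That conjecture is the gap: you do not establish it, and the heuristic you offer—controlling $2\max_{i}\langle f_i,x\rangle^2$ by $x^\top G_t x$ because ``the $f_i$ in the maximum are exactly those building $G_t$''—does not go through, since $g_i=\nabla_i f_i$ with $\nabla_i$ possibly zero, so a feature can enter the max without contributing to $G_t$. More fundamentally, a max-of-linear-functionals seminorm has $\tfrac12\|\cdot\|^2$ with strong-convexity constant zero (the $L_\infty$ obstruction), so Lemma~\ref{thm:addnorm} cannot extract a positive $\sigma$ from that piece, let alone one scaling like $1/r$. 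The paper does not engage with this issue at all; it implicitly treats $\sigma$ as an absolute constant and places the entire $r$-dependence in the Luo--Agarwal potential estimate.
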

Note that if we were to run a one-dimensional copy of this algorithm on each coordinate of the problem, we would obtain an algorithm that is invariant to diagonal transformations with a regret bound matching that of \cite{kempka2019adaptive} in both logarithmic terms and dependence on $g_t$.
\begin{proof}
Our first task is to show that $\|\cdot\|_t$ is a valid seminorm. To do this we show first that the maximum of any two seminorms is a seminorm, which implies that $\sqrt{\max_{i\le t+1}\langle f_i, x\rangle^2}$ is a seminorm. Combined with Lemma \ref{thm:addnorm}, this shows that $\|\cdot\|_t$ is a seminorm for all $t$. To see that the maximum of two seminorms $\|\cdot\|=\max(\|\cdot\|_a, \|\cdot\|_b)$ is a seminorm, observe that clearly the maximum satisfies $a\|x\|=\|ax\|$ so that we need only check the triangle inequality. For this we have
\begin{align*}
    \|x+y\|&=\max(\|x+y\|_a,\|x+y\|_b)\\
    &\le \max(\|x\|_a+\|y\|_a, \|x\|_b+\|y\|_b)\\
    &\le \max(\|x\|_a,\|x\|_b)+\max(\|y\|_a,\|y\|_b)\\
    &=\|x\|+\|y\|
\end{align*}

Next, since $c_t$ is 1-Lipschitz, we must have $|\nabla_t|\le 1$. Finally, we have
\begin{align*}
    \|g_t\|_{t-1,\star}&=\sup_{\|x\|_{t-1}\le 1} \langle g_t, x\rangle\\
    &\le \sup_{\langle f_1, x\rangle^2+langle f_t,x\rangle^2 + \sum_{i=1}^{t-1} \langle g_t, x\rangle^2\le 1}\langle g_t, x\rangle\\
    &\le \sup_{\langle f_1, x\rangle^2+\langle \nabla_tf_t,x\rangle^2 + \sum_{i=1}^{t-1} \langle g_t, x\rangle^2\le 1}\langle g_t, x\rangle\\
    &=\sup_{\langle f_1, x\rangle^2+\sum_{i=1}^{t} \langle g_t, x\rangle^2\le 1}\langle g_t, x\rangle\\
    &=\|g_t\|_{(G_t+f_1f_1^\top)^{-1}}\\
    &\le 1
\end{align*}

Now from direct application of Theorem \ref{thm:varynorm}, we have
\begin{align*}
    R_T(\w)&\le \tilde O\left(\|\w\|_{T-1}\sqrt{\sum_{t=1}^T \|g_t\|_{t-1,\star}^2}\right)\\
    &=\tilde O\left(\sqrt{\max_t\langle f_t, \w\rangle^2 + \sum_{t=1}^T \langle g_t, \w\rangle^2}\sqrt{\sum_{t=1}^T \|g_t\|_{G_t^{-1}}}\right)
\end{align*}
Now we apply Theorem 4 of \cite{luo2016efficient}, which states:
\begin{align*}
\sum_{t=1}^T \|g_t\|_{G_t^{-1}}&\le r + \frac{r(r+1)}{2}\log\left(1+\frac{2\sum_{t=1}^T \|g_t\|_{\mathbf{2}}^2}{r(r+1)\lambda_\star}\right)
\end{align*}
And so the regret bound follows.

To see that the algorithm is scale-invariant, we need to examine the update of Algorithm \ref{alg:varynorm} in a little more detail. To start, observe that since we consider $W$ to be an entire vector space, Algorithm \ref{alg:varynorm} is in fact identical to Algorithm \ref{alg:1dred}, so we may restrict our attention to that algorithm instead. Consider two sequences of features $f_1,\dots,f_T$ and $Mf_1,\dots,Mf_T$ for some invertible matrix $M$. For any relevant variable $z$ we will use $z_t$ to indicate the $t$th value of that variable when running an algorithm using $f_1,\dots,f_T$, and the $z_{t,M}$ to indicate the $t$th value of that variable when running an algorithm using $Mf_1,\dots,Mf_T$, so that for example $f_{t,M}=Mf_t$. Note that since $w_1=w_{1,M}=0$, we have $\langle f_1, w_1\rangle = \langle f_{1,M}, w_{1,M}\rangle$. Further, we have $x_{1,M}=0=(M^{-1})^\top x_1$ and $y_1=y_{1,M}$, where $x_i$ and $y_i$ indicate the outputs of FTRL and the one-dimensional parameter-free subroutines in Algorithm \ref{alg:1dred}. Suppose for purposes of induction that $y_i=y_{i,M}$ and $\langle f_i, x_i\rangle= \langle f_{i,M}, x_{i,M}\rangle$ for all $i\le t$. Note that this implies that $\langle f_i, w_i\rangle = \langle f_{i,M},w_{i,M}\rangle$ for all $i\le M$. Then we must have $\nabla_i=\nabla_{i,M}$ for all $i\le t$ and so $g_{i,M}=Mg_i$ for all $i\le t$. From this we can conclude that $\|x\|^2_{t,M}=x^\top MG_TM\top x=\|M^\top x\|_t$ for arbitrary invertible $M$. Further, we have 
\begin{align*}
    \|g_{t,M}\|_{t,M,\star} &= \sup_{\|x\|_{t,M}\le 1}\langle Mg_t, x\rangle\\
    &=\sup_{\|x\|_t\le 1}\langle Mg_t,(M^{-1})^\top x\rangle\\
    &=\sup_{\|x\|_t\le 1}\langle g,x\rangle\\
    &=\|g_t\|_{t,\star}
\end{align*}
So that the regularizers used in the FTRL subroutine of Algorithm \ref{alg:varynorm} satisfy $\psi_t(M^\top x)=\psi_{t,M}(x)$. Since $\sum_{i=1}^t g_{i,M}=M\sum_{i=1}^t g_i$, this implies that the output of the FTRL subroutine, $x_{t+1}$, satisfies $(M^{-1})^\top x_{t+1} = x_{t+1,M}$ so that $\langle f_{t+1}, x_{t+1}\rangle = \langle f_{t+1, M}, x_{t+1,M}\rangle$. Further, note that $s_i = \langle g_i, x_i\rangle$, so that by the induction hypothesis, $s_{i,M}=s_i$ for $i\le t$. Since $y_{t+1}$ depends only on $s_i$ for $i\le t$, we have $y_{t+1}=y_{t+1,M}$ and so by induction the algorithm is scale-invariant for all time steps.
\end{proof}







\end{document}